\documentclass{applemlr}

\usepackage{amsmath}
\usepackage{enumerate}
\usepackage{algorithm}
\usepackage{algpseudocode}
\usepackage{amsfonts}
\usepackage{amsthm}
\usepackage{cleveref}
\usepackage{diagbox}
\usepackage{colortbl}
\usepackage{amssymb}
\usepackage{xspace}
\usepackage{wrapfig}
\usepackage{adjustbox}
\usepackage{tabularx}
\usepackage{booktabs}
\usepackage{mathtools}
\usepackage{tikz}
\usepackage{enumitem}
\usepackage{silence}
\usepackage{dsfont}
\usepackage[table]{xcolor}
\usepackage[dvipsnames]{xcolor}
\usepackage{multirow}
\usepackage{makecell}
\usepackage{xfakebold}
\usepackage{amsmath,amsfonts,bm}

\def\eqref#1{equation~\ref{#1}}
\def\1{\bm{1}}

\DeclareMathAlphabet{\mathsfit}{\encodingdefault}{\sfdefault}{m}{sl}
\SetMathAlphabet{\mathsfit}{bold}{\encodingdefault}{\sfdefault}{bx}{n}

\definecolor{textgray}{HTML}{6E6E73}
\usetikzlibrary{positioning, calc}
\usetikzlibrary{decorations.pathmorphing}

\makeatletter
\patchcmd{\wrong@fontshape}{\@gobbletwo}{}{}{}
\makeatother
\numberwithin{equation}{section}
\makeatletter
\AtBeginDocument{
  \urlstyle{sf}
  
}
\makeatother

\definecolor{light}{RGB}{125, 125, 125}
\crefname{tcb@cnt@pbox}{code}{code}
\Crefname{tcb@cnt@pbox}{Code}{Code}
\crefname{assumption}{assumption}{assumption}
\Crefname{assumption}{Assumption}{Assumptions}

\newtcolorbox[auto counter]{pbox}[2][]{
  colback=white,
  title=Code~\thetcbcounter: #2,
  #1,fonttitle=\sffamily,
  fontupper=\sffamily,
  arc=2pt,
  colframe=bgcolor,
  coltitle=fgcolor,
  colbacktitle=bgcolor,
  toptitle=0.25cm,
  bottomtitle=0.125cm
}

\makeatletter
\newcommand\applefootnote[1]{%
  \begingroup
  \renewcommand\thefootnote{}%
  \renewcommand\@makefntext[1]{\noindent##1}%
  \footnote{#1}%
  \addtocounter{footnote}{-1}%
  \endgroup
}
\makeatother

\definecolor{cverbbg}{gray}{0.90}

\usepackage{hyperref}
\usepackage{url}
\usepackage{pifont}
\usepackage{xcolor} 
\usepackage{amsmath}
\usepackage{graphicx}
\usepackage{dsfont}% colors
\usepackage{bm}
\usepackage{algorithm}            % algorithm float
\usepackage{algpseudocode}
\usepackage{listings}
\usepackage{subcaption}
\usepackage{booktabs}
\usepackage{colortbl}
\usepackage{tikz}
\usepackage{amsmath, amsthm}

\makeatletter
\newcommand*\bigcdot{\mathpalette\bigcdot@{.6}}
\newcommand*\bigcdot@[2]{\mathbin{\vcenter{\hbox{\scalebox{#2}{$\m@th#1\bullet$}}}}}
\makeatother
\newcommand{\llm}{\texttt{LM}_{\theta}}
\newtheorem{proposition}{Proposition}
\title{The Potential of CoT for Reasoning: A Closer Look at Trace Dynamics}

\author{Gregor Bachmann}
\author{Yichen Jiang}
\author{Seyed Mohsen Moosavi Dezfooli}
\author{Moin Nabi}

\affiliation{Apple}

\abstract{
Chain-of-thought (CoT) prompting is a de-facto standard technique to elicit reasoning-like responses from large language models (LLMs), allowing them to spell out individual steps before giving a final answer. While the resemblance to human-like reasoning is undeniable, the driving forces underpinning the success of CoT reasoning still remain largely unclear. In this work, we perform an in-depth analysis of CoT traces originating from competition-level mathematics questions, with the aim of better understanding how, and which parts of CoT actually contribute to the final answer. To this end, we introduce the notion of a \textit{potential}, quantifying how much a given part of CoT increases the likelihood of a correct completion. Upon examination of reasoning traces through the lens of the potential, we identify surprising patterns including (1) its often strong non-monotonicity (due to reasoning \textit{tangents}), (2) very sharp but sometimes tough to interpret spikes (reasoning \textit{insights} and \textit{jumps}) as well as (3) at times \textit{lucky guesses}, where the model arrives at the correct answer without providing any relevant justifications before. While some of the behaviours of the potential are readily interpretable and align with human intuition (such as insights and tangents), others remain difficult to understand from a human perspective. To further quantify the reliance of LLMs on reasoning \textit{insights}, we investigate the notion of CoT \textit{transferability}, where we measure the potential of a weaker model under the partial CoT from another, stronger model. Indeed aligning with our previous results, we find that as little as $20\%$ of partial CoT can ``unlock'' the performance of the weaker model on problems that were previously unsolvable for it, highlighting that a large part of the mechanics underpinning CoT are transferable.}

\metadata[Correspondence]{\sffamily Gregor Bachmann: \url{g_bachmann@apple.com}}
\date{\sffamily\today}
\begin{document}

\maketitle

\section{Introduction}

Chain-of-thought (CoT) reasoning \citep{wei2023chainofthoughtpromptingelicitsreasoning} has led to several breakthroughs in domains ranging from mathematics \citep{cobbe2021trainingverifierssolvemath, gao2023palprogramaidedlanguagemodels, luo2025wizardmathempoweringmathematicalreasoning, deepseekai2025deepseekr1incentivizingreasoningcapability} to coding \citep{chen2021evaluatinglargelanguagemodels, austin2021programsynthesislargelanguage, doi:10.1126/science.abq1158, lozhkov2024starcoder2stackv2, rozière2024codellamaopenfoundation}, enabling modern language models to now win gold medals at mathematical olympiads \citep{luong2025robustmathematicalreasoning}. The underlying idea of CoT is very simple and intuitive: let the model reason through the given problem and explain its steps before giving a final answer. This approach offers two main advantages: (1) Generating additional tokens means more computation available to the model, allowing it to implement more complex routines. (2) CoT enables the model to decompose complex problems into more manageable sub-tasks, akin to human reasoning.  \\[2mm]
The success of chain-of-thought reasoning is undeniable, yet the precise mechanisms driving it remain poorly understood.  A very tempting explanation, due to their (by design) strong resemblance to human reasoning, is that LLMs similarly benefit from spelling out bigger computations more slowly, using techniques such as backtracking and verification to explore several avenues before finally  arriving at the best answer \citep{zhou2023leasttomost, shinn2023reflexionlanguageagentsverbal, madaan2023selfrefineiterativerefinementselffeedback, press2023measuringnarrowingcompositionalitygap}. Other works however suggest that the content of CoTs might not always reflect the actual solving strategy of the model, for instance \citet{lanham2023measuringfaithfulnesschainofthoughtreasoning, chen2025reasoningmodelsdontsay} show that the model's explanations to addition task do not line up with the underlying computation performed internally. This result seems to rather suggest that CoTs primarily act as computational mechanisms, letting the model execute more complicated algorithms or heuristics ``under the hood'' while at the same time mimicking human reasoning.

These perspectives motivate a closer look at how CoT actually contributes in practice.
%To gain insights into the nature of CoT,
We therefore closely examine reasoning traces produced by several models with a focus on competition-level mathematics questions from AIME-2024, AIME-2025 \citep{aime} and MATH-500 \citep{hendrycks2021measuringmathematicalproblemsolving}, GPQA-Diamond \citep{rein2023gpqagraduatelevelgoogleproofqa} for more general reasoning, as well as coding problems from HumanEval \citep{chen2021evaluatinglargelanguagemodels}.
In the main text we focus on AIME, as its difficult questions present an ideal arena to study properties of reasoning chains, especially as modern models still produce highly variable CoTs for the same question, \textit{sometimes} leading to the correct solution, but often failing to do so. In this paper, we aim to understand what, or which parts of a CoT make it successful or wrong? To gain insights into this question, we introduce the notion of the \textit{potential}, defined as the probability of success of the model when sampling conditioned on a given partial chain of the CoT (see Eq.~\ref{eq:potential} for a precise definition). As the potential initially starts out low (models can only sometimes arrive at the right answer), we can use it to monitor precisely which tokens (or collection thereof) increase or decrease it, equipping us with a tool to understand what parts of CoT unlock a previously difficult problem. We observe that similarly to humans, LLMs often exhibit reasoning \textit{insights}, i.e., strong increases of the potential due to the completion of a conceptually difficult step (see e.g., Fig.~\ref{fig:illustration}, \ref{fig:non-monotonic}, \ref{fig:insight+tangent}, \ref{fig:optimal_cot_annotated}, \ref{fig:math-500-llama}, \ref{fig:math-500-llama-2}, \ref{fig:potential_humaneval2} or \ref{fig:potential_humaneval}). Not all spikes in the potential are easily interpretable however; we find that performance can significantly increase through seemingly trivial steps, coined reasoning \textit{jumps} (see e.g. Fig.~\ref{fig:non-monotonic} or Fig.~\ref{fig:easy-difficult}). Surprisingly, we observe that the potential is far from monotonic, i.e. not every token contributes effectively towards the final answer but rather long durations of no progress or even sharp drops can occur. The latter are often due to reasoning \textit{tangents}, i.e. approaches which initially look promising but ultimately lead to dead ends or even wrong answers, (see e.g. Fig.~\ref{fig:insight+tangent}, \ref{fig:math-500-llama}, \ref{fig:math-500-llama-2}, \ref{fig:potential_humaneval2} or \ref{fig:potential_humaneval}). \\[2mm]
To further study the usage of reasoning insights in language models, we investigate the degree of \textit{transferability} of CoT between different models.  We focus on providing a weaker model with the (partial) CoT from a stronger one, with the motivation that if models indeed struggle with conceptual understanding of the problem, their reasoning might be unblocked when being provided correct sub-steps. Indeed, difficult mathematical questions often involve solving several steps of non-uniform difficulty, with some problems even becoming mostly trivial for humans once a specific insight is obtained or provided. An illustrative example of such a question is shown in Fig.~\ref{fig:illustration}, taken from AIME-1989 \citep{aime}. While the question might look intimidating to many math students at first sight, the problem becomes easily solvable when presented with the insight that $n(n+1)(n+2)(n+3)+1 = (n^2 + 3n + 1)^2$. In other words, human reasoning is often able to transfer if the gap is not too large. For LLMs, we find similar results; problems that were previously unsolved by the weaker model, gradually become solvable as more and more CoT is provided, even as little as $20\%$ of CoT leads to a significant improvement in performance. We observe such transferability even between very different model classes, e.g. Qwen3-0.6B's accuracy significantly improves when provided with partial CoT of GPT-OSS-20B. This suggests that language models can profit from insights provided by stronger models, suggesting that some CoTs work in a model-agnostic way.
\begin{figure}
    \centering
\includegraphics[width=0.5\linewidth]{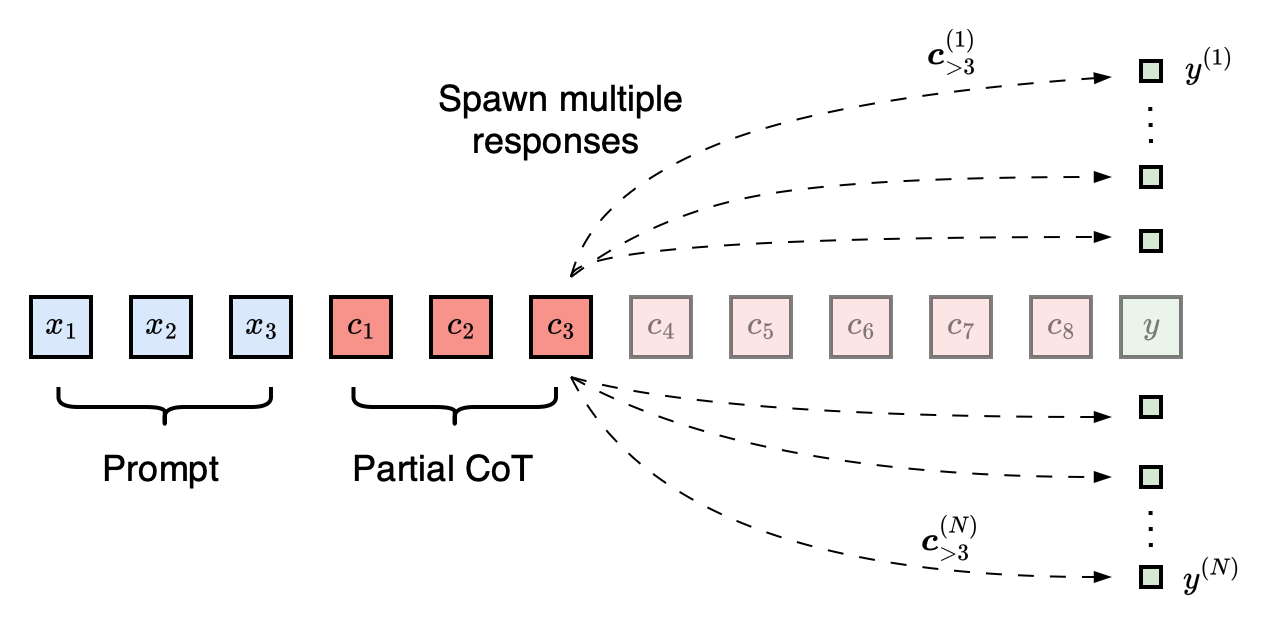}%
\includegraphics[width=0.4\linewidth]{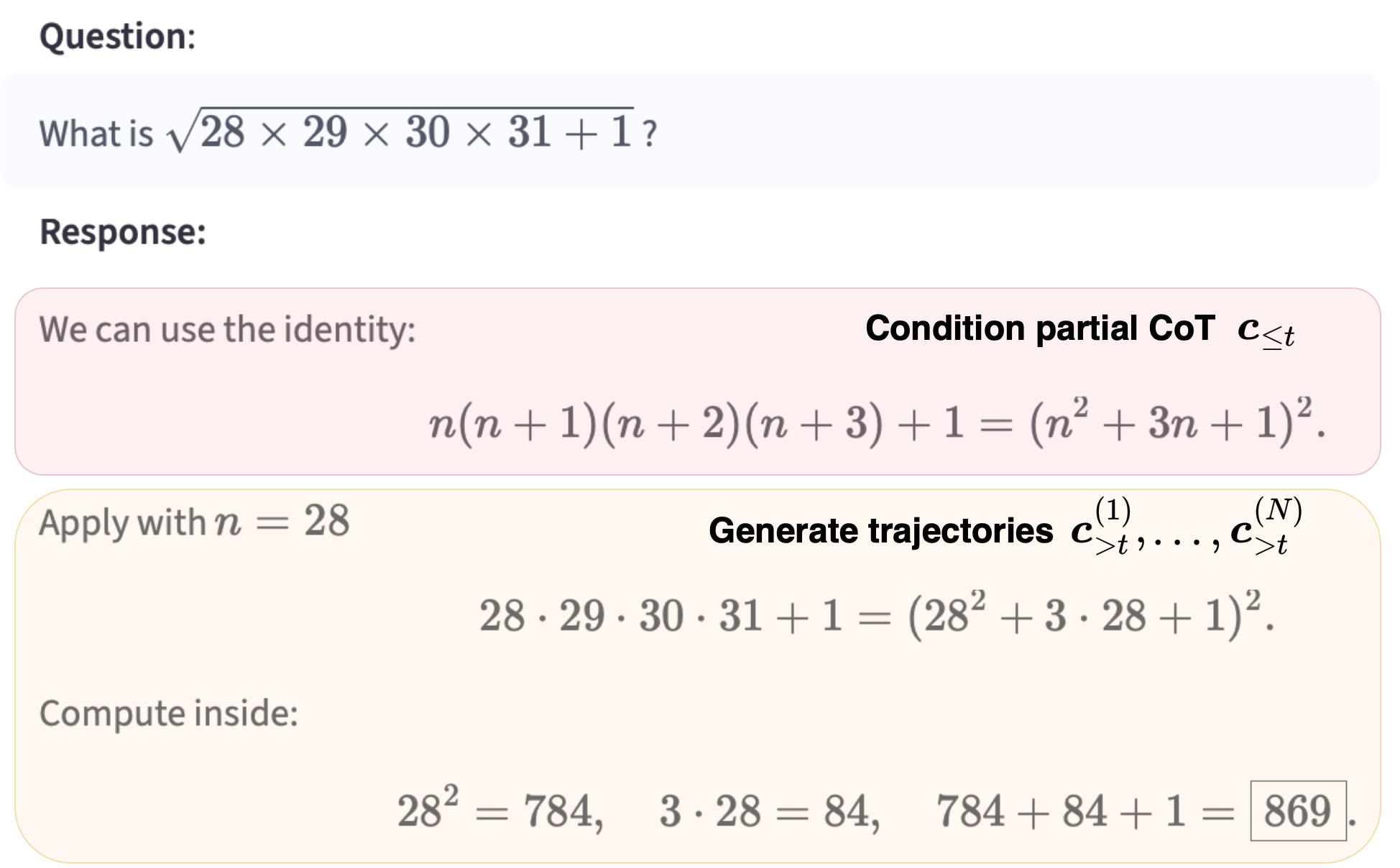}
    \caption{\textbf{Left:} Illustration of the calculation of the potential. \textbf{Right:} An example prompt and partial CoT, which in this case should intuitively raise the probability of success (i.e. the potential) significantly once discovered or provided by another model.}
    \label{fig:illustration}
\end{figure}

\section{Related Work}
Chain-of-Thought reasoning has been very influential in recent years, with every modern language model now being trained to give reasoning-like responses. This characteristic has been strongly exacerbated by the emergence of reasoning models such as o1 \citep{openai2024openaio1card} and R1 \citep{deepseekai2025deepseekr1incentivizingreasoningcapability}, further encouraging longer responses by training with reinforcement learning with verifiable rewards. Such models now regularly require generating $128k$ tokens for difficult mathematics questions before returning a final answer. The still human-like nature of these reasoning chains has inspired a surge of works with the aim of interpreting and understanding how these long sequences of tokens actually contribute to the final answer. A line of work has investigated how models react when their CoT is manipulated through insertion of mistakes \citep{wang-etal-2023-towards} or changes in symbols \citep{madaan2023what, madaan2022textpatternseffectivechain}, finding them to be surprisingly robust. Other works have investigated several attribution strategies to identify important parts in CoT \citep{golovneva2023roscoe, berchansky-etal-2024-cotar, wu2023analyzingchainofthoughtpromptinglarge}. Opposite types of findings have also been made; \citet{lyu-etal-2023-faithful, lanham2023measuringfaithfulnesschainofthoughtreasoning, Madsen2024AreSF} have observed that CoT does not always reflect the underlying computation of the model, making it thus difficult to pin-point helpful steps in the first place. Other works go a step further and argue that CoT reasoning should not be compared to human reasoning \citep{kambhampati2025stopanthropomorphizingintermediatetokens, stechly2025semanticsunreasonableeffectivenessreasonless, bhambri2025cognitivelyinterpretablereasoningtraces} or that they outright imitate reasoning without actually performing any \citep{shojaee2025illusionthinkingunderstandingstrengths}. Finally, the line of works most similar to ours also studies conditional generation from partial CoTs; \citet{bigelow2025forking} investigate so-called ``fork tokens'' in the context of neural text generation. \citet{bogdan2025thoughtanchorsllmreasoning} also explore the notion of conditional generation to find ``thought anchors'', parts of CoT that help the model arrive at correct answers. While their focus is on more abstract reasoning concepts such as backtracking and self-verification, we focus on task-relevant insights and also explore the failure modes of CoT reasoning. Moreover, \citet{amani2025rlreasoningadaptivelyrevealing} also explore the notion of completing partial CoTs, incorporating the idea in reinforcement learning for better reward signal. Finally, \citet{wang2026entropylangletextttthinkrangle} study a similar metric to our potential but use it in the different context of early-exiting long reasoning chains while preserving accuracy. Our latter definition of the potential also shares strong resemblance to the value function in actor-critique models \citep{NIPS1999_6449f44a, 10.5555/3312046}, similarly measuring the quality of a given state but in a Monte-Carlo fashion.

\begin{figure}
    \centering
    \includegraphics[width=0.5\linewidth]{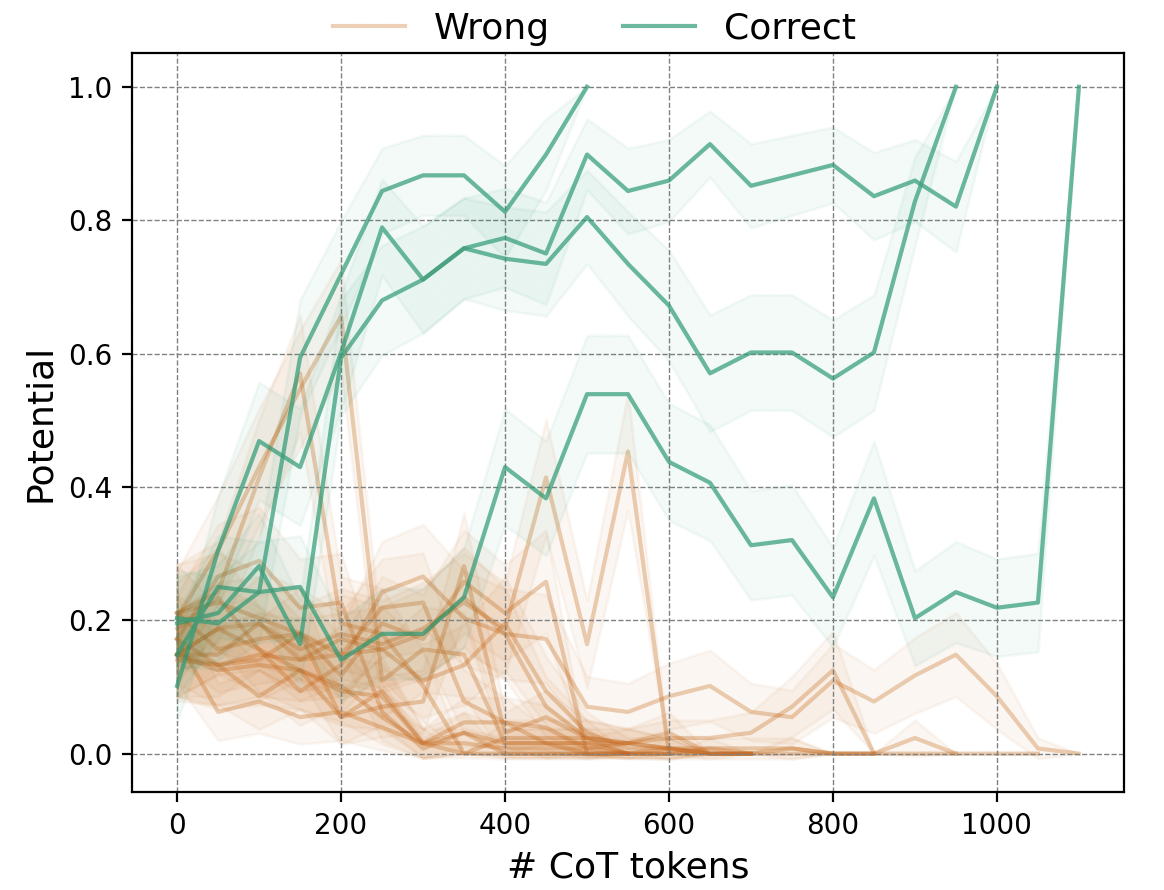}%
    \includegraphics[width=0.5\linewidth]{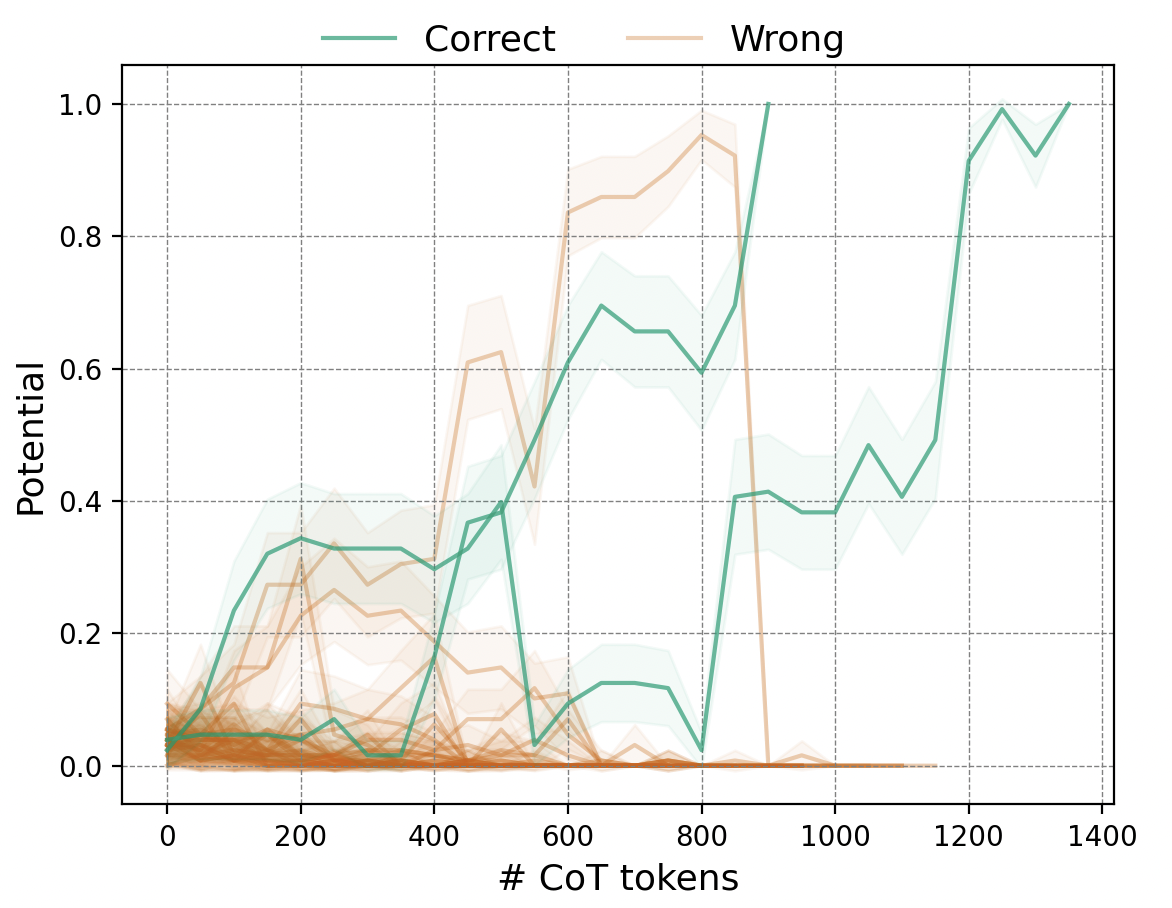}
    \caption{\textbf{Potential curves.} Potential of correct and wrong CoTs for Qwen2.5-7B on AIME-2024, Question 5 and 11. Strongly non-monotonic behaviour for both correct and incorrect CoTs.}
    \label{fig:sea_of_cots}
\end{figure}
\section{Potential of CoT}
\paragraph{Setup.} Let $\mathcal{V}$ denote the vocabulary. Assume we have a tokenized input prompt $\bm{x} \in \mathcal{V}^{D}$ (e.g., encoding a math question) and a ground truth answer ${y}^{*} \in \mathcal{V}$ (for simplicity represented by a single token) encoding the expected response (e.g. ``513''). Let $\llm$ represent a language model with parameters $\bm{\theta}$, mapping a sequence of tokens $\bm{x}$ to the logits of size $|\mathcal{V}|$. When answering to a prompt,  models now generate $T \in \mathbb{N}$ intermediate \textit{chain-of-thought} tokens $\bm{c} \in \mathcal{V}^{T}$ auto-regressively before arriving at a final answer ${{y}}$. I.e. given prompt $\bm{x}$, we generate $c_t \sim \llm(\bigcdot|\bm{c}_{<t}, \bm{x})$ auto-regressively and only then sample the answer, ${y} \sim \llm(\bigcdot|\bm{c}, \bm{x})$. Generations involving such intermediate tokens have been observed to outperform models trained (or prompted) to directly provide an answer in a variety of settings \citep{wei2023chainofthoughtpromptingelicitsreasoning}. 
We will often abuse notation slightly by letting
$({y}, \bm{c}_{\geq t}) \sim \llm(\bigcdot|\bm{c}_{<t}, \bm{x})$
denote the (sequential) autoregressive generation, conditional on $(\bm{c}_{<t}, \bm{x})$. Typical decoding strategies in language models leverage this stochastic generation and it is hence interesting to consider $K\in\mathbb{N}$ such generations by varying the random seeds, either unconditionally or starting from a partial CoT $\bm{c}_{< t}$,
$$\left({y}^{(k)}, \bm{c}_{\geq t}^{(k)}\right) \stackrel{i.i.d.}{\sim}\llm(\bigcdot|\bm{c}_{< t}, \bm{x}) \hspace{3mm} \text{ for } k=1, \dots, K$$
where we obtain most likely distinct CoT completions $\bm{c}_{\geq t}^{(k)}$ and final answers $y^{(k)}$.

\paragraph{Potential.} Given a prompt $\bm{x}$ and an associated reasoning process $\bm{c}$ with final answer $y$, it is natural to ask which sub-steps in $\bm{c}$ contributed most to the overall result. Let us define the \textit{potential} of a chunk of CoT $\bm{c}_{<t}$ on the prompt $\bm{x}$ as the probability of correct generation conditioned on $\bm{c}_{< t}$,
\begin{equation}
    \label{eq:potential}
    \operatorname{pot}(\bm{c}_{< t};\bm{x}) := \mathbb{P}_{(\bm{c}_{\geq t}, {y})\sim \llm(\bigcdot | \bm{c}_{< t}, \bm{x})}\left({y}=y^{*}\right)    
\end{equation}
\begin{figure}
    \centering
    \includegraphics[width=0.5\linewidth]{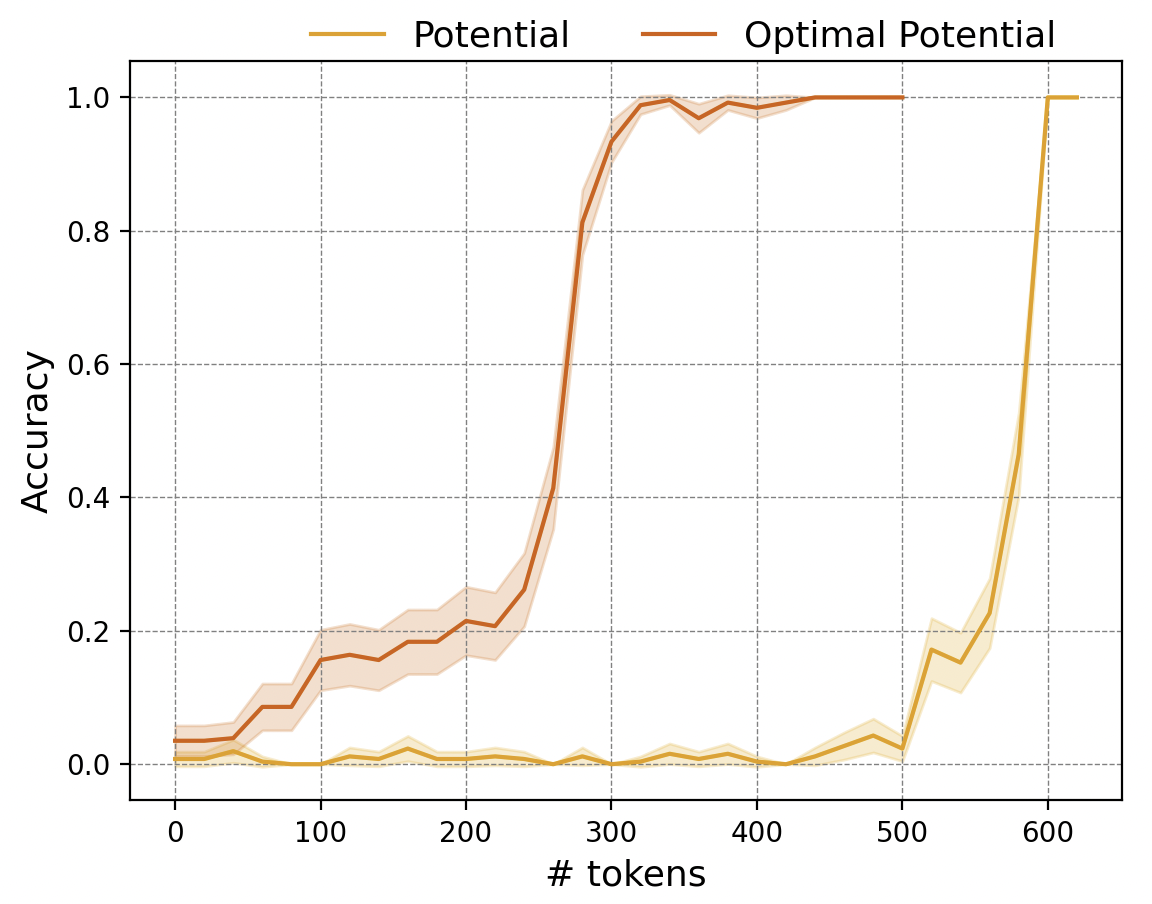}%
    \includegraphics[width=0.5\linewidth]{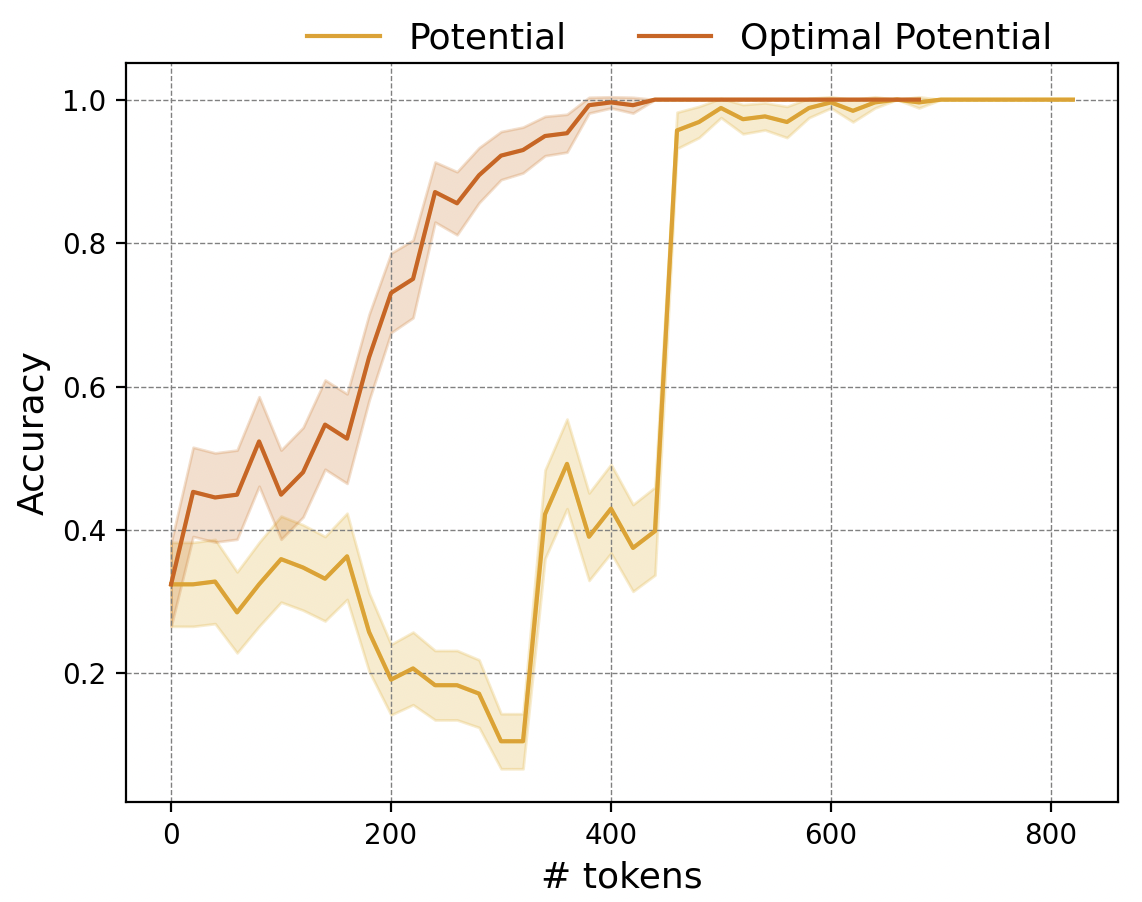}
    \caption{The potential of optimal and standard CoT for AIME-2025-I, question 1 and 5. While standard CoT eventually arrives at the right answer, optimal CoT does so in a more robust way.}
    \label{fig:optimized-cot}
\end{figure}

Intuitively, if a chunk of CoT is useful or encompasses a step that the model tends to struggle with, conditioning on it should subsequently lead to a higher potential. In mathematical terms, if conditioning on a shorter prefix $\bm{c}_{< s}$ for $s<t$ has a lower potential compared to $\bm{c}_{< t}$ i.e. $\operatorname{pot}(\bm{c}_{<s};\bm{x}) < \operatorname{pot}(\bm{c}_{< t};\bm{x})$, this implies that the CoT chunk $\bm{c}_{s<t}$ ``made progress'' towards the final solution. On the other hand, if the potential remains similar, $\operatorname{pot}(\bm{c}_{< s};\bm{x}) \approx \operatorname{pot}(\bm{c}_{< t})$, then the chunk of CoT $\bm{c}_{s<t}$ did not solve a step that is difficult to the model, as it can reliably reproduce it under sampling. This does not necessarily imply that such steps can be skipped as they could entail necessary computations such as a long multiplication, which the model can reliably do but also \textit{needs} to do. Finally, we can have situations where the potential decreases, with CoTs actively worsening the state of the model. On average however, we can show mathematically that the potential improves monotonically over all correct CoTs:
\begin{proposition}
Conditional on the event that the full CoT $\bm{c}_{1:T}$ yields the correct final answer $y^{*}$, it holds for every $t\leq T$ that
$$\mathbb{E}\left[\operatorname{pot}(\bm{c}_{< t};\bm{x})\right] \leq \mathbb{E}\left[\operatorname{pot}(\bm{c}_{< t+1};\bm{x})\right].$$
\label{prop:potential}
\end{proposition}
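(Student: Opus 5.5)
Write $P_t := \operatorname{pot}(\bm{c}_{<t};\bm{x})$ and let $C := \{y = y^*\}$ be the event that the full generation is correct. The approach is to recognise $P_t$ as a martingale under the unconditional generation law and then show that conditioning on the terminal success event $C$ tilts expectations upward. Concretely, the expectation in Proposition~\ref{prop:potential} is $\mathbb{E}[P_t \mid C]$, where $(\bm{c}, y)$ is drawn from $\llm(\bigcdot|\bm{x})$. First observe that, by the tower property, $P_t = \mathbb{P}(C \mid \mathcal{F}_{t})$, where $\mathcal{F}_t$ is the sigma-algebra generated by $\bm{c}_{<t}$. This holds because the generation is autoregressive, so $\operatorname{pot}(\bm{c}_{<t};\bm{x})$ is exactly the conditional probability of success given the prefix. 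Consequently $(P_t)_t$ is a Doob martingale with respect to the filtration $(\mathcal{F}_t)_t$, and $\mathbb{P}(C) = \mathbb{E}[P_t]$ for every $t$.

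Next, compute the conditional expectation. For any $\mathcal{F}_t$-measurable $Z$ we have $\mathbb{E}[Z \mid C] = \mathbb{E}[Z\,\mathds{1}_C]/\mathbb{P}(C) = \mathbb{E}[Z P_t]/\mathbb{P}(C)$. Applying this with $Z = P_t$ gives $\mathbb{E}[P_t \mid C] = \mathbb{E}[P_t^2]/\mathbb{P}(C)$. Likewise $\mathbb{E}[P_{t+1}\mid C] = \mathbb{E}[P_{t+1}^2]/\mathbb{P}(C)$. The claim therefore reduces to showing $\mathbb{E}[P_t^2] \le \mathbb{E}[P_{t+1}^2]$. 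This follows from the martingale property together with conditional Jensen: $P_t = \mathbb{E}[P_{t+1}\mid\mathcal{F}_t]$, so $P_t^2 \le \mathbb{E}[P_{t+1}^2 \mid \mathcal{F}_t]$, and taking expectations finishes the argument. Equivalently, $\mathbb{E}[P_{t+1}^2] - \mathbb{E}[P_t^2] = \mathbb{E}[(P_{t+1}-P_t)^2] \ge 0$ by orthogonality of martingale increments, which also quantifies the gap as the variance of the potential increment.

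The main obstacle is not analytic but definitional. I must carefully justify the identification $P_t = \mathbb{P}(C\mid\mathcal{F}_t)$, including the edge cases. If the CoT has a variable length $T$, with an end-of-thought token, I would pad terminated sequences with a dummy token, so that once the CoT terminates the potential stays constant; the inequality then remains valid for $t \le T$. The case $t = T$, where $P_{T+1}$ is the success probability of the answer draw given the full CoT, fits the same framework. I also need $\mathbb{P}(C) > 0$ so that the conditioning is well defined, and I would state this as an implicit assumption.
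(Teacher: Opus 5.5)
Your argument is correct, and it is organized differently from the paper's. The paper works prefix by prefix under the success-conditioned law. It writes $f_{t+1}$ via Bayes' rule in terms of the next-token distributions $p_1,p_0$ given success or failure. It then computes $\mathbb{E}_{p_1}[f_{t+1}]=f_t\sum_c p_1(c)^2/q(c)$, where $q=f_tp_1+(1-f_t)p_0$ is the unconditional next-token law. Finally it bounds the sum below by $1$ via Cauchy--Schwarz (the sum equals $1+\chi^2(p_1\,\|\,q)$). This gives the pointwise statement that the potential is a submartingale under $\mathbb{P}(\cdot\mid C)$, which is then averaged over prefixes.

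You instead stay under the unconditional law, where $P_t$ is a Doob martingale. You use the size-biasing identity $\mathbb{E}[Z\mid C]=\mathbb{E}[ZP_t]/\mathbb{P}(C)$ to reduce the claim to monotonicity of $\mathbb{E}[P_t^2]$.

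At the core the two are the same convexity fact. Bayes gives $p_1(c)=q(c)f_{t+1}(c)/f_t$, so the paper's quantity $f_t\sum_c p_1^2/q$ equals $\mathbb{E}_q[f_{t+1}^2]/f_t$. Its Cauchy--Schwarz step is therefore exactly your conditional Jensen step $\mathbb{E}_q[f_{t+1}^2]\ge f_t^2$. Localizing your identity, $\mathbb{E}[P_{t+1}\mid\mathcal{F}_t,C]=\mathbb{E}[P_{t+1}^2\mid\mathcal{F}_t]/P_t\ge P_t$ on $\{P_t>0\}$, recovers the paper's pointwise version.

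Your route buys two things. First, it gives the exact gap formula $\mathbb{E}[P_{t+1}\mid C]-\mathbb{E}[P_t\mid C]=\mathbb{E}[(P_{t+1}-P_t)^2]/\mathbb{P}(C)$, with equality iff the token almost surely leaves the potential unchanged. Second, it avoids per-prefix divisions by $f_t$ and $q(c)$; the paper's $p_1$ is only defined on prefixes with $f_t>0$, whereas you need only $\mathbb{P}(C)>0$. The paper's route, in turn, makes the per-prefix submartingale property explicit. It also reads the upward drift as a divergence between the success-conditioned and unconditional next-token distributions.
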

We invite the reader to check the proof in Appendix~\ref{app:proof}. Hence on average, every token $c_t$ should push the potential higher, encouraging the model to converge towards the correct solution, reflecting the intuition that chain-of-thought performs \textit{evidence accumulation}.
Calculating the potential exactly is unfortunately intractable, so in practice we use the following estimator instead,
$$\operatorname{pot}_N(\bm{c}_{< t};\bm{x}) := \frac{1}{N}\sum_{n=1}^{N}\mathds{1}_{\{{y}^{(n)} = y^{*}\}} \hspace{3mm}\text{where } \left({y}^{(n)}, \bm{c}_{\geq t}^{(n)}\right) \sim \llm(\bigcdot|\bm{c}_{< t}, \bm{x})$$
Sampling a higher number of trajectories $N$ will provide a better approximation to the true potential. We observe that setting $N=128$ gives very reliable estimations of the potential and use it throughout this work. We provide more experimental details regarding the computational complexity in Appendix \ref{app:compute-potential}.

\section{Shape of Potential Curves}
\label{sec:quantiative}
\begin{figure}
    \centering
    \includegraphics[width=0.5\linewidth]
    {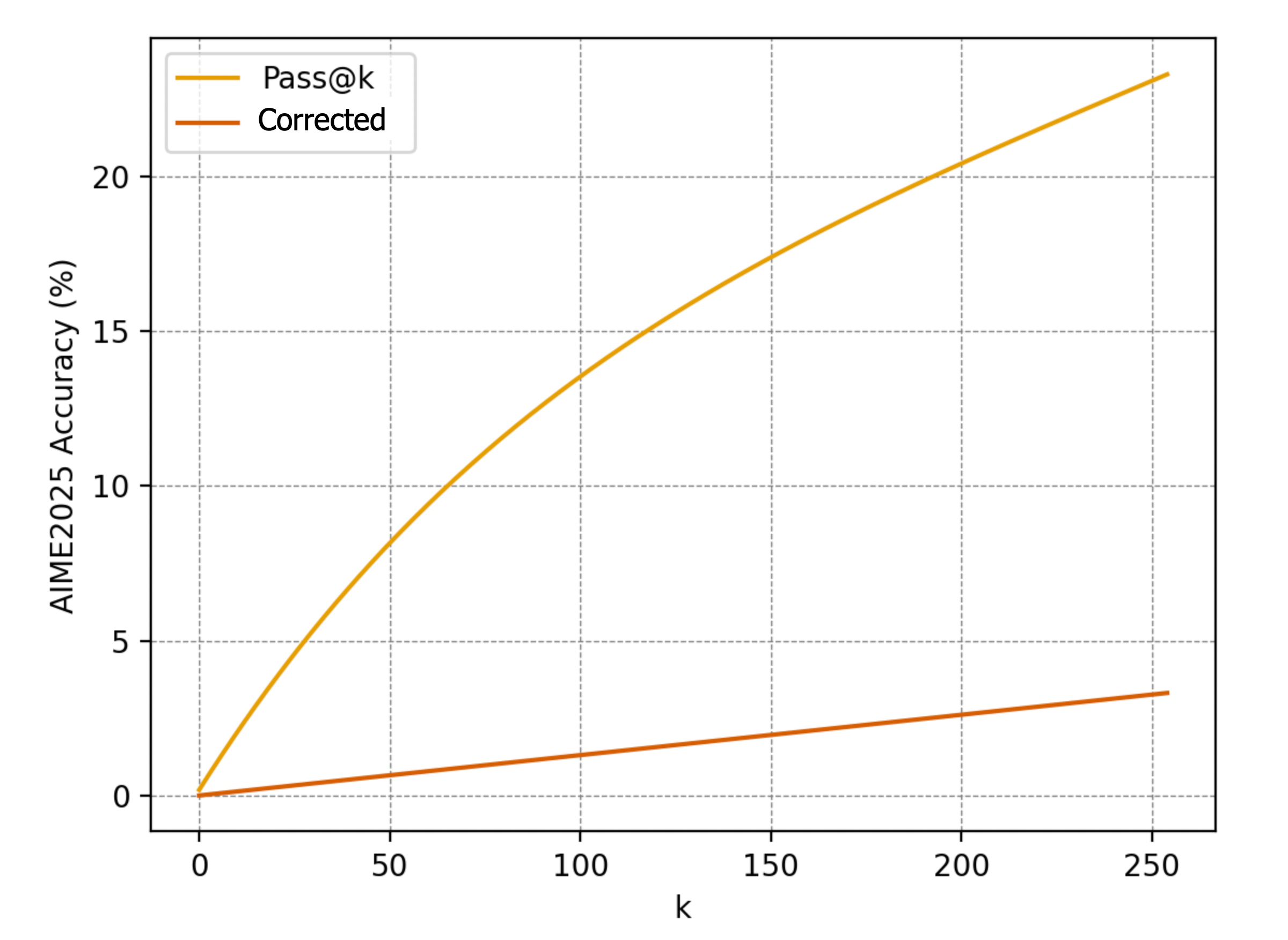}%
    \includegraphics[width=0.5\linewidth]
    {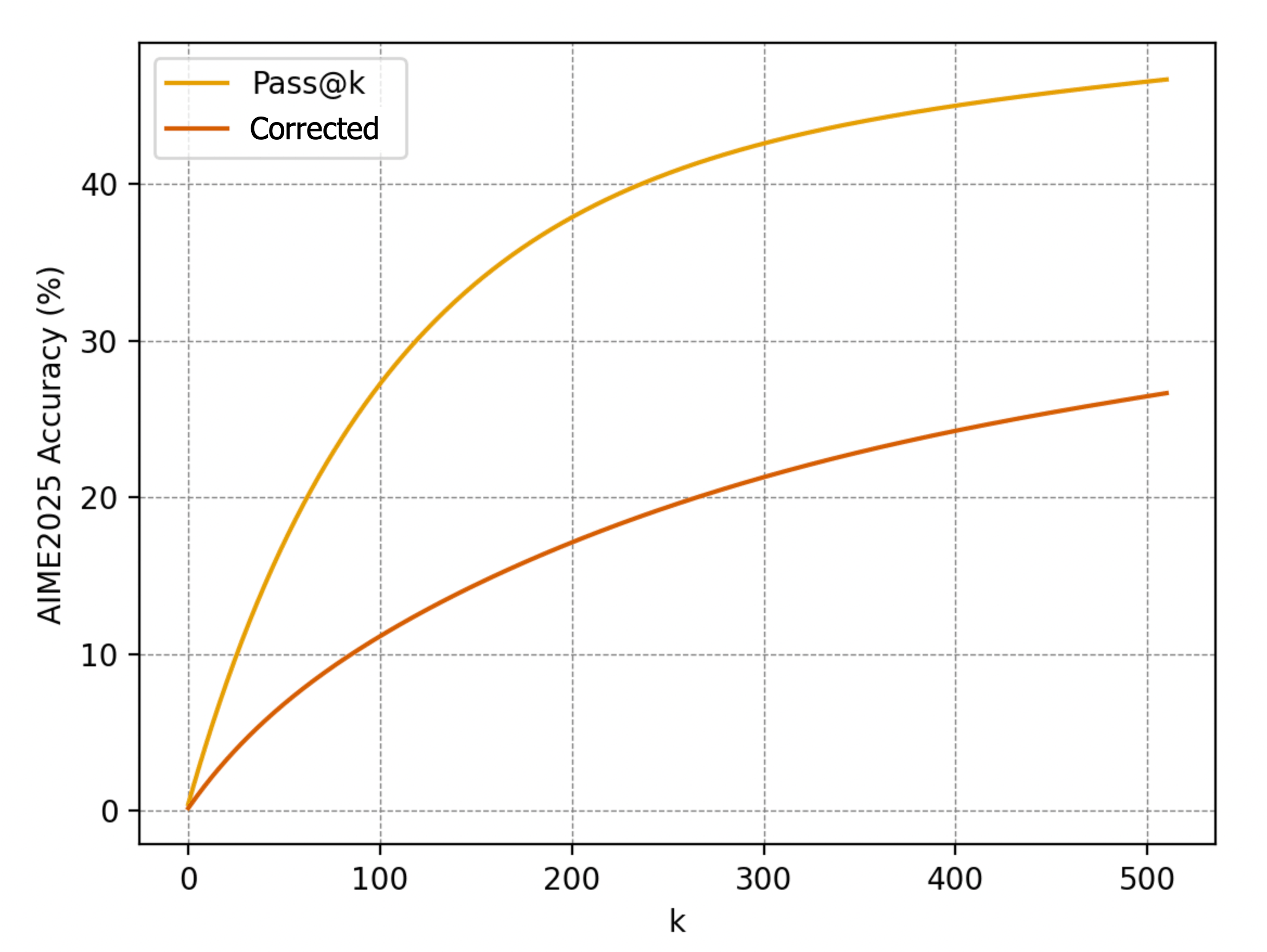}
    \caption{\textbf{Inflated} $\operatorname{pass@}\hspace{-0.5mm}k$. We show $\operatorname{pass@}\hspace{-0.5mm}k$ accuracies and the corresponding corrected values for Qwen2.5-1.5B (left) and Qwen2.5-7B (right).}
    \label{fig:pass_k}
\end{figure}
We now empirically study the potential $\operatorname{pot}(\bm{c}_{< t};\bm{x})$ as a function of the CoT chunk length $t$. When conditioning on CoTs $\bm{c}$ that lead to the correct answer $y=y^*$, based on Prop.~\ref{prop:potential}, we expect the potential to be a smooth and monotonic function in $t$, with every chunk of CoT $\bm{c}_{s<t}$ positively contributing to the overall solution. We will mainly focus on difficult competition-level mathematics questions, where the potential $\operatorname{pot}(\bm{c}_{< 0};\bm{x})$ corresponding to the ``empty'' CoT $\bm{c}_{< 0}$ is strictly between $0$ and $1$, i.e. the model only sometimes produces the correct answer when prompted from scratch. If the model is always correct, the potential does not offer any insight into the CoT; all steps are equally easy to the model. In contrast, if performance starts significantly lower, we can precisely pinpoint where a successful CoT overcame hurdles that stopped most other attempts. We calculate the potential curves for a variety of models, including both the non-thinking types of models Qwen2.5 (sizes 1.5B and 7B), \citep{qwen2025qwen25technicalreport} and Llama-3.1 (sizes 8B and 70B) \citep{grattafiori2024llama3herdmodels}, as well as the reasoning models Qwen3 (sizes 0.6B and 32B) \citep{yang2025qwen3technicalreport}. We display a variety of potential curves (both for correct and wrong trajectories) in Fig.~\ref{fig:sea_of_cots} for two samples taken from AIME-2024. Surprisingly, typical chain-of-thought exhibits quite erratic potentials, with certain sections of CoT actively worsening the probability of success, going against the theoretical result in Prop.~\ref{prop:potential}. We will examine the characteristics of potential curves qualitatively in close detail in Sec.~\ref{sec:qualitative}. We quantify the following properties of potential curves often exhibited across AIME-2024: (1) Very sharp increases in the potential in a small token window, we will later refer to these occurrences as reasoning \textit{insights} and \textit{jumps}. (2) Very sharp drops in the potential, we coin this behaviour reasoning \textit{tangents} or \textit{flaws}. (3) Extremely late increases in the potential, which previously remained flat and close $0$. We will show qualitatively in Sec.~\ref{sec:qualitative} that such CoTs are very often associated with \textit{guessing}, i.e. the model produces a correct answer without relying on its previously generated reasoning and at times even admits to do so.

\paragraph{Quantifying the shape.} In the following, we will derive some quantitative summary statistics corresponding to the observations we made based on the plots in Fig.~\ref{fig:sea_of_cots}. We calculate the potentials for $128$ responses per sample on AIME-2024 (total of $30 \times 128$ samples) and filter out responses that led to wrong answers. We further only consider samples that are difficult enough for the given model to not reach perfect accuracy without any partial CoT. We then derive four summary statistics that aim to describe the properties introduced above, we detail their precise definitions in Appendix \ref{app:statistics}. We show analogous results on MATH-500 in Table~\ref{tab:math-500} in the Appendix.

\begin{figure}
    \centering
    \includegraphics[width=1\linewidth]{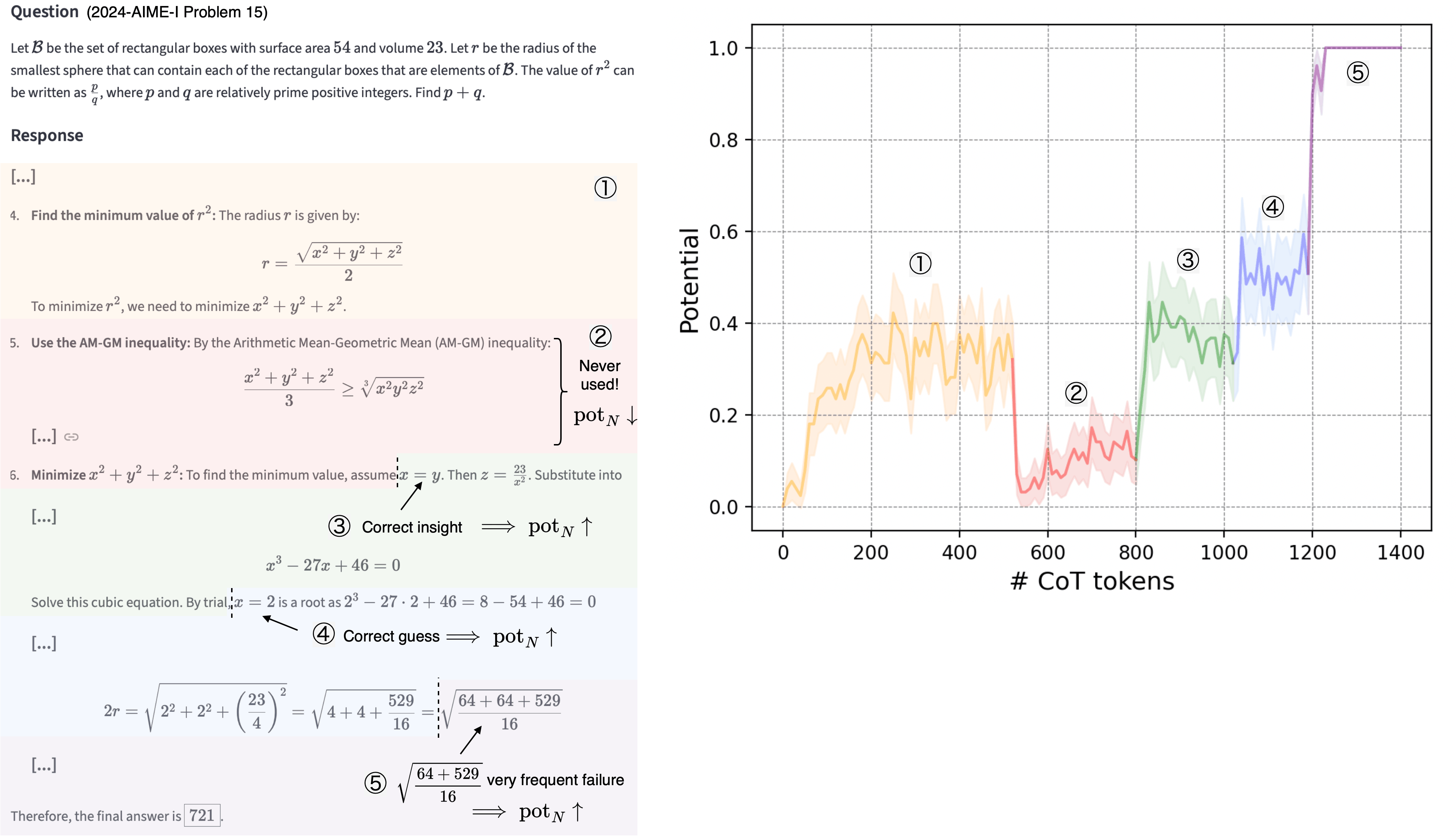}
    \caption{\textbf{Reasoning tangents and insights.} Qwen2.5-7B's potential $\operatorname{pot}_{256}(\bigcdot;\bm{x})$ behaving strongly non-monotonic. The reasoning \textit{tangent} \textcircled{\raisebox{-0.9pt}{2}} hurts the potential, while the reasoning \textit{insights} \textcircled{\raisebox{-0.9pt}{3}} (observing the symmetry $x=y$ of the problem) and \textcircled{\raisebox{-0.9pt}{4}} (finding the root of the cubic equation) push the potential back on track. Finally, the model performs a reasoning \textit{jump} \textcircled{\raisebox{-0.9pt}{5}} (for some non-obvious reason, this particular calculation is difficult for the model).}
    \label{fig:non-monotonic}
\end{figure}
We display the results in Table~\ref{tab:qwen_comparison}. Our initial observations are substantiated; only half of the CoTs exhibit monotonicity, with reasoning models tending to produce even more erratic potentials. Non-reasoning models seem to exhibit more late spikes, which aligns with our qualitative observations later in Sec.~\ref{sec:qualitative} that such models tend to produce correct answers often through guessing on very difficult problems. Model size also seems to surpress this behaviour more, which is expected since larger models generally tend to perform better. Reasoning tangents occur more often for reasoning models, aligning well with the observation in the literature that such models have the tendency to \textit{overthink} \citep{chen2025do}, i.e. they discard the discovered, correct answer and explore alternative but flawed approaches. This also partially explains their less monotonic potential. All models exhibit a high amount of reasoning insights, suggesting that most of the difficulty is concentrated in a few key steps instead of being uniformly spread out, more akin to human reasoning.

\begin{table}[H]
\vskip 0.15in
\begin{center}
\begin{small}
\begin{sc}
\begin{tabular}{lccccc}
\toprule
Model & Reasoning & Insights $\uparrow$ & Tangents $\downarrow$ & Late spike & Monotonicity \\
\midrule
Qwen2.5-1.5B & \ding{55} & $40\%$ & $5\%$ & $20\%$ & $45\%$ \\[1mm]
Qwen2.5-7B   & \ding{55} & $62\%$ & $9.5\%$ & $14\%$ & $42\%$ \\[1mm]
Llama3.1-8B & \ding{55} & $46\%$ & $33\%$ & $6\%$ & $15\%$ \\[1mm]
Llama3.1-70B & \ding{55} & $37\%$ & $40\%$ & $5\%$ & $17\%$ \\[1mm]
Qwen3-0.6B   & \ding{51} & $55\%$ & $41\%$ & $10\%$ & $15\%$ \\[1mm]
Qwen3-32B    & \ding{51} & $36\%$ & $18\%$ & $0\%$ & $36\%$ \\
\bottomrule
\end{tabular}
\end{sc}
\end{small}
\end{center}
\vskip -0.1in
\caption{Behaviours of potential for several reasoning and non-reasoning models on AIME-2024.}
\label{tab:qwen_comparison}
\end{table}

\paragraph{Amount of guessing.} We now focus on the situations where the potential spikes very late in the reasoning process, which we connect almost uniquely with the model \textit{guessing} answers correctly. This has a strong effect on the $\operatorname{pass@}\hspace{-0.5mm}k$ metric used to assess model capabilities, which we show is strongly impacted in the case of Qwen2.5-1.5B and 7B when used in conjunction with weak verification such as final answer verification in mathematical benchmarks. For a dataset consisting of $P$ queries $\{\bm{x}_i\}_{i=1}^{P}$ with corresponding answers $\{y_i^{*}\}_{i=1}^{P}$, we sample $k$ responses $y_i^{(j)}$ per question from the model and measure if the correct answer is at least once among this set, i.e.
$$\operatorname{pass@}\hspace{-0.5mm}k = \frac{1}{P}\sum_{i=1}^{P}\mathds{1}_{\big{\{}y^{*} \in \{y^{(1)}_i, \dots, y^{(k)}_i\}\big{\}}}$$
The idea behind this metric is to measure whether the model does possess the ability to sometimes achieve the right answer, albeit not reliably. Especially for large $k$, this metric could fall victim to lucky guesses as (1) it only takes one correct answer to obtain the full score and (2) the reasoning process is usually not being assessed in the case of mathematics benchmarks. Indeed, in Fig.~\ref{fig:pass_k} we show that the $\operatorname{pass@}\hspace{-0.5mm}k$ scores can be very inflated by flagging samples with the \textit{late spike} statistic, in this case on AIME-2025.

\paragraph{Optimizing the potential.} Given our observation that CoT does not naturally follow a monotonic curve, with many tokens even worsening performance, the following question emerges:
$$\textit{Can we search the space of CoT $\bm{c}$ such that every sub-CoT }\bm{c}_{s<t}\textit{ contributes?}$$
One way to try and maximize the potential of every chunk of CoT is to set a chunk size $C \in \mathbb{N}$ and randomly explore candidate chunks, calculate their potential and keep the highest scoring chunk. In this manner, we can construct a CoT that increases the potential at least gradually if the model admits such reasoning, ideally avoiding issues such as reasoning tangents. We summarize the recipe in Algorithm~\ref{algo:optimal} and the incurred computational complexity in Appendix~\ref{app:optimal-cot} more formally. We indeed find that models admit such optimized CoT, we display some associated potential curves in contrast with regular CoT in Fig.~\ref{fig:optimized-cot}. We can indeed see that the optimized CoT displays strong monotonicity with most tokens contributing to the potential. This is in stark contrast with the standard CoT, which either does not increase the potential for a long token horizon (left side of the figure), or even actively worsens it (right side). We examine such CoTs more qualitatively in Appendix~\ref{app:more-cot}.
\begin{figure}
    \centering
    \includegraphics[width=1\linewidth]{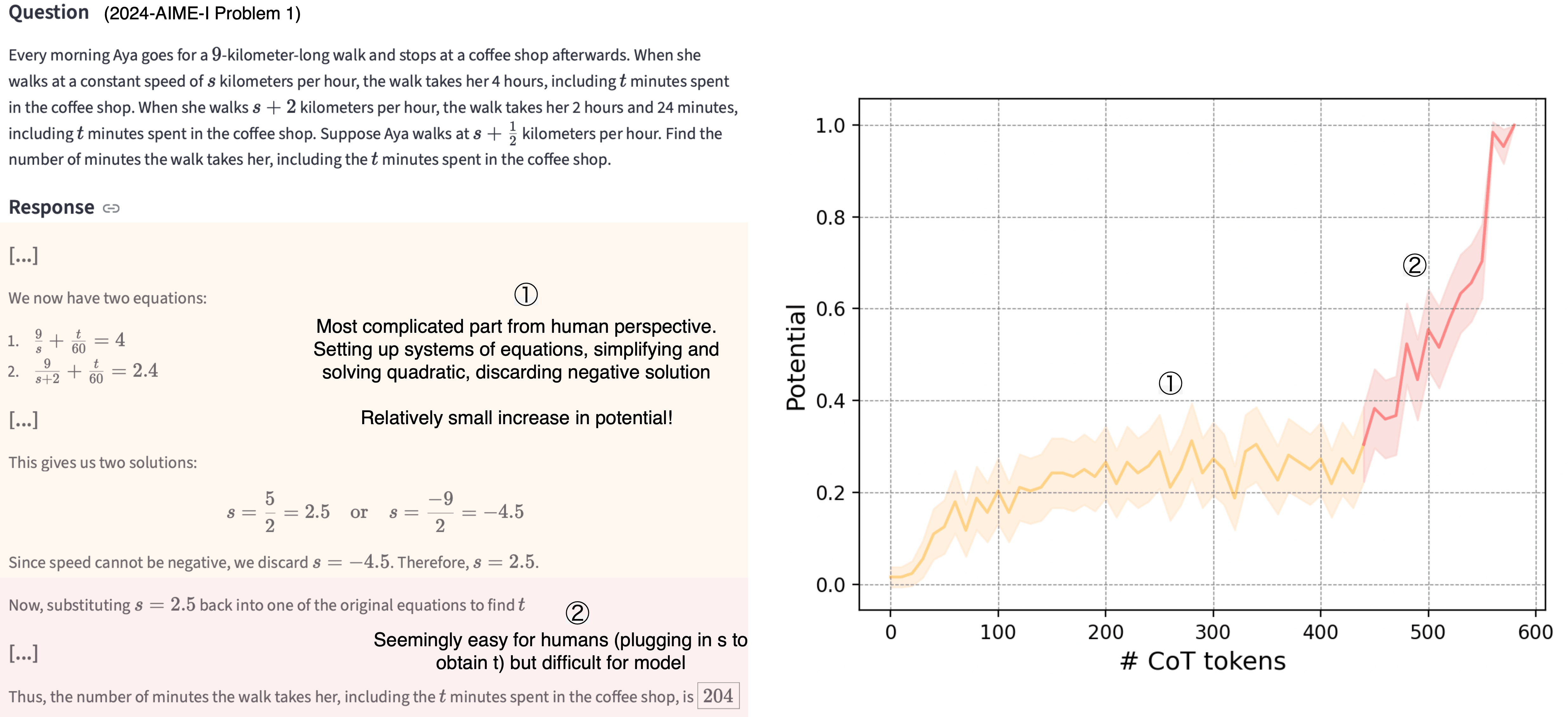}
    \caption{\textbf{Unaligned difficulty.} Qwen2.5-1.5B solves most difficult parts in \textcircled{\raisebox{-0.9pt}{1}} but only small increase in potential. Seemingly easier part \textcircled{\raisebox{-0.9pt}{2}} of just obtaining $t$ given $s$ and adding the two turns out to be significantly more difficult.}
    \label{fig:easy-difficult}
\end{figure}

\section{A Closer Look at Chain-of-Thought Reasoning}
\label{sec:qualitative}
We now perform a qualitative analysis of various chain-of-thought reasonings on competition-level mathematics. Due to the verbosity of reasoning models such as Qwen3, we limit this section to the Qwen2.5 series, whose CoT is more concise and thus more amenable to direct interpretation. The only exception is Fig.~\ref{fig:guess-reasoning}, where we display parts of a trace from Qwen3-0.6B. Our goal is to precisely align the potential curve with the underlying reasoning produced by the model, and as a consequence obtain an understanding of the types of tokens that drive or hinder the progress. For space reasons we defer from displaying the full CoT but instead show only the sections crucial to the potential. We refer the interested reader to Appendix~\ref{app:more-cot} for additional qualitative examples.\\
We display the first sample obtained from Qwen2.5-7B in Fig.~\ref{fig:non-monotonic}, a potential curve that exhibits strong non-monotonicity as we have often encountered (see Fig.~\ref{fig:sea_of_cots}). We dissect the reasoning into five segments according to the potential. Segment \textcircled{\raisebox{-0.9pt}{1}} steadily makes progress towards the solution by correctly expressing the radius as a function of the sides of the box and formulating the optimization problem. In segment \textcircled{\raisebox{-0.9pt}{2}} the model goes on a reasoning \textit{tangent}, a step that is not necessarily wrong but happens to not work out for the particular problem (\textit{AM-GM inequality} gives a non-tight lower bound for the minimum). The model manages to ignore this step in this particular trajectory, but on average suffers from this distraction, leading to a sharp drop in the potential. In segment \textcircled{\raisebox{-0.9pt}{3}} and \textcircled{\raisebox{-0.9pt}{4}}, the model correctly recognizes the symmetry of the problem as well as discovers the root of the cubic equation, with both \textit{insights} consequently boosting the accuracy akin to human reasoning. Finally, in segment \textcircled{\raisebox{-0.9pt}{5}} we observe the final spike in the potential, stemming from a simple arithmetics step that the model tends to get wrong. While the previous spikes were readily interpretable, the last one seems more unintuitive, given that the model manages to very reliably perform the arguably harder arithmetics steps just before. We coin this a reasoning \textit{jump}, a very sharp increase in the potential that largely seems due to a very model-specific issue. \\[2mm]Such misalignment in perceived difficulty of sub-steps is often present in CoT, in Fig.~\ref{fig:easy-difficult} we display another reasoning trace of Qwen2.5-7B along with the associated potential which exhibits this surprising characteristic. Segment \textcircled{\raisebox{-0.9pt}{1}} here does the conceptual heavy-lifting; it correctly deduces the associated system of equations in two variables, simplifies and obtains the solution for the first variable $s$. The completion of these seemingly involved steps is only rewarded with a small increase in potential, as opposed to humans, the model does not struggle here. Instead, the more difficult steps contained in segment \textcircled{\raisebox{-0.9pt}{2}} consist of now obtaining the second variable $t$, which only involves plugging the value for $s$ into the previously derived equation. Compared to the previous segment, finishing the problem starting from the end of \textcircled{\raisebox{-0.9pt}{1}}  would be a significantly simpler task for humans.\\[2mm]
Another surprising insight we obtained is that models can be very capable of guessing solutions to such problems. In Fig.~\ref{fig:guess-reasoning} (and Fig.~\ref{fig:guess-reasoning-instruct}) we display the reasoning of Qwen3-0.6B. While the content of segment \textcircled{\raisebox{-0.9pt}{1}} at first sight looks relevant, closer inspection reveals that the final answer ``$80$'' is not at all deduced from the reasoning performed before. The answer seems to be a lucky guess, most likely informed by the fact that answers to such competition-level questions usually take the form of an integer value. This guessing is elegantly reflected in the potential curve; the reasoning in segment \textcircled{\raisebox{-0.9pt}{1}} (which essentially encompasses the entire CoT) does not make any progress at all towards the final answer, precisely because the model is most likely making a guess in the end, which more often than not ends up being wrong.

\begin{figure}
    \centering
    \includegraphics[width=1\linewidth]{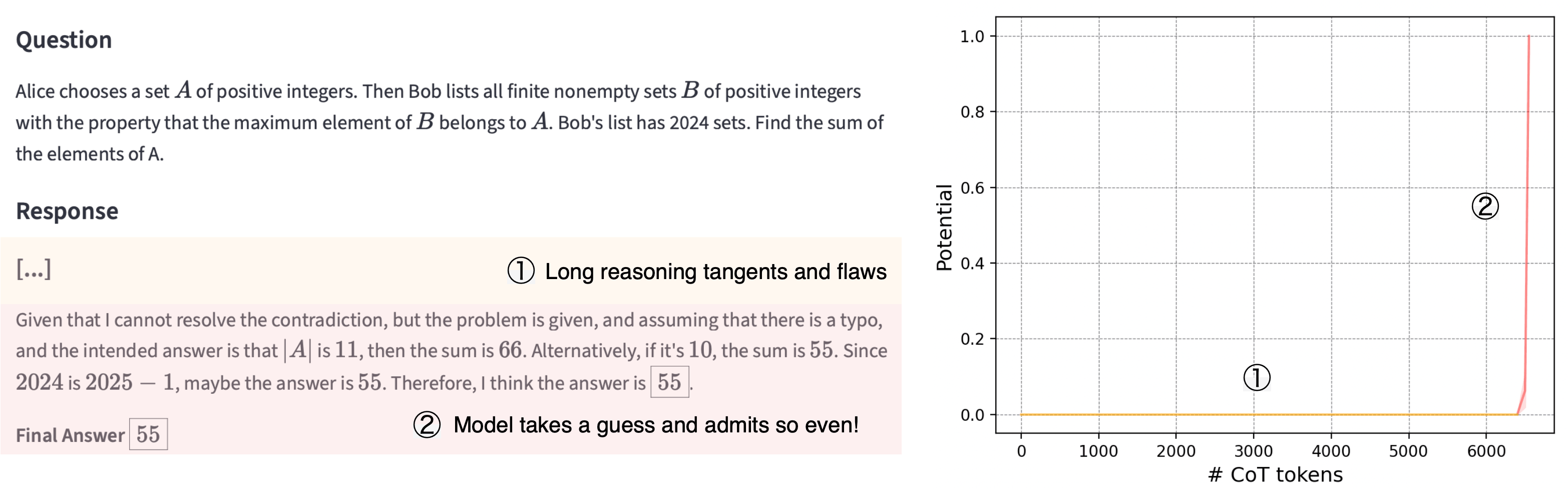}
    \caption{\textbf{Reasoning tangents and guessing.} Qwen3-0.6B goes on a long reasoning tangent in \textcircled{\raisebox{-0.9pt}{1}} that does not increase the potential over a long token horizon. Finally it outputs a final answer in \textcircled{\raisebox{-0.9pt}{2}}, itself admitting that the guess is not backed by the reasoning prior but seems likely to the model.}
    \label{fig:guess-reasoning}
\end{figure}

\section{Transferability of CoT Through the Lens of Potential}
Motivated by the insights from Sec.~\ref{sec:quantiative} and \ref{sec:qualitative}, we now investigate if reasoning \textit{insights} transfer between different families of models, which would further underscore that the mechanisms underlying CoT reasoning share parallels with human reasoning. We hypothesize that if the sub-steps present information gain through reasoning insights, (similar to e.g. Fig.~\ref{fig:illustration}), weaker models could be able to solve problems that were previously too difficult. We study this scenario for both reasoning and non-reasoning models. In the first setup we consider Qwen3-0.6B as the weak model, which is provided with partial CoT from its bigger version Qwen3-32B. We also explore traces from GPT-OSS-20B to further assess how robust transferability is with respect to out-of-distribution scenarios. For the non-reasoning models we instead create a dataset of \textit{gold} CoT, using one of the strongest public models Qwen3-235B to produce answers on AIME-2024 in thinking mode. We then extract the CoT after thinking, which presents a clean summary of the long thinking traces and use these as partial traces. We then test the weaker Qwen2.5-7B and Qwen2.5-72B models, letting them complete the partial responses for various percentages. We display the resulting test accuracies as a function of the fraction of partial CoT in Fig.~\ref{fig:partial-aime}. We observe that surprisingly, in both reasoning and non-reasoning scenarios, the models manage to not only maintain their original accuracies but quickly improve (with as little as 20\% CoT), answering previously unsolved questions. While the CoT does seem to transfer better within the same family, Qwen3-0.6B can still leverage the significantly different traces from GPT-OSS-20B, suggesting that the mechanisms driving the performance are universally shared between models to a strong degree.
\begin{figure}
    \centering
    \includegraphics[width=0.5\linewidth]{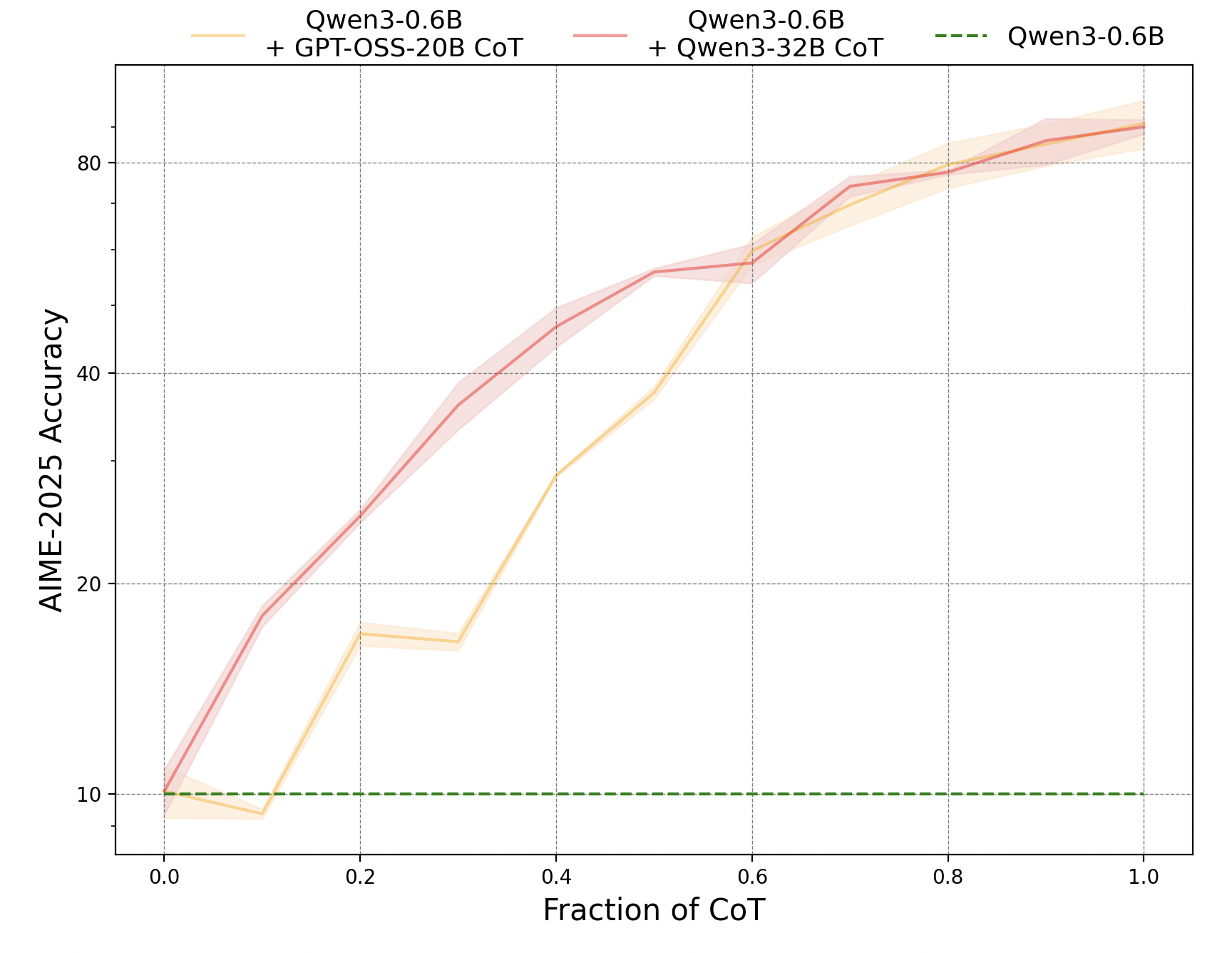}%
    \includegraphics[width=0.46\linewidth]{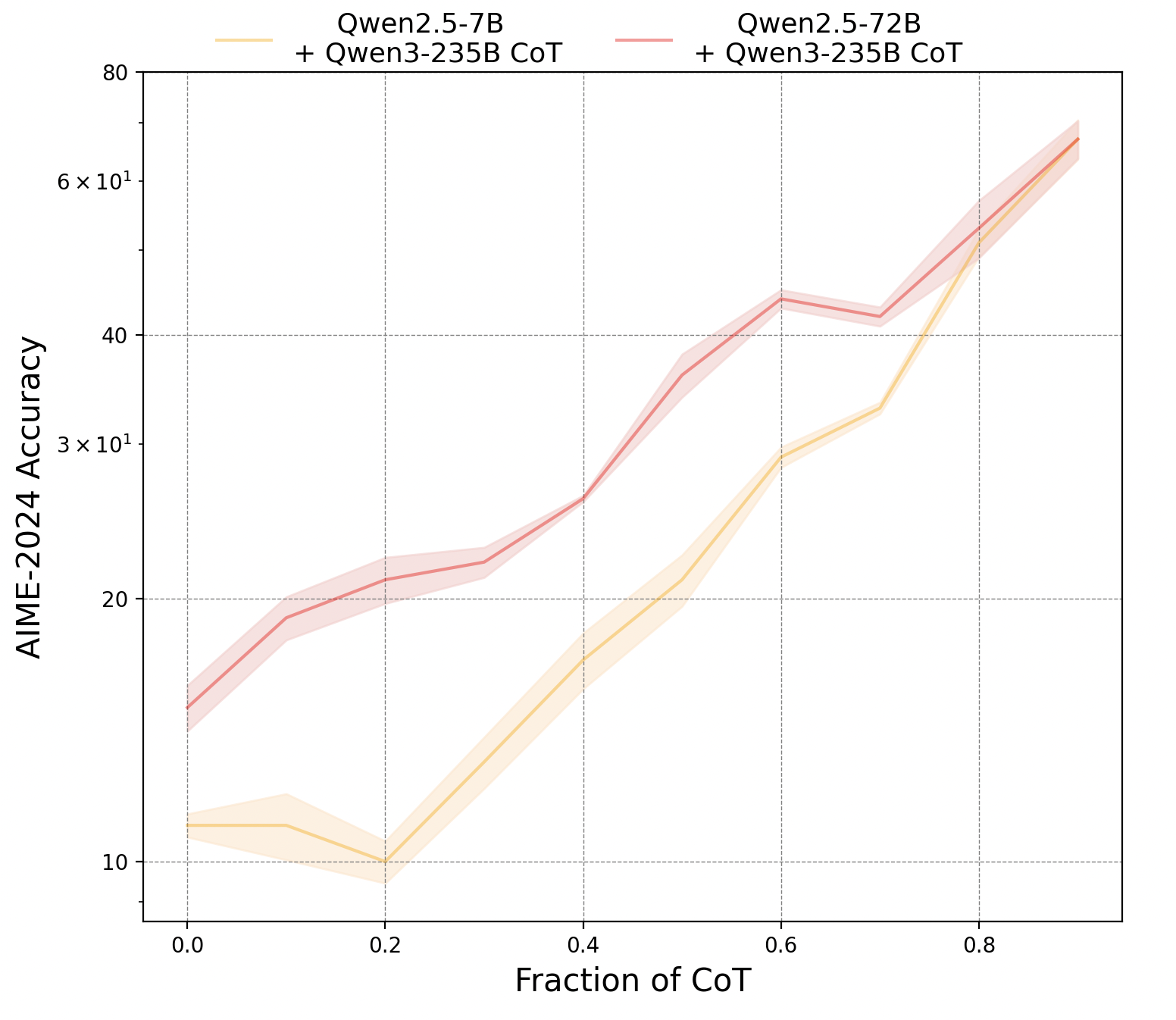}
    \caption{\textbf{Transferability of CoT.} \textbf{Left:} Accuracy on AIME-2025 of weaker reasoning model Qwen3-0.6B when provided with partial CoT from Qwen3-32B (red) and from GPT-OSS-20B (orange), leading to very quick improvements. \textbf{Right:} Accuracy of non-reasoning models Qwen2.5-7B and Qwen2.5-72B when provided with a partial CoT based on the final summary output of reasoning model Qwen3-235B.}
    \label{fig:partial-aime}
\end{figure}

\section{Conclusion}
In this work we have investigated chain-of-thought reasoning in large language models through the notion of the associated potential. We have performed an in-depth analysis of parts of CoT that strongly move the potential upwards (reasoning insights and jumps), as well as tokens that actively worsen the performance due to reasoning tangents. We further investigate reasoning insights by introducing the notion of CoT transferability, which measures to what degree a weaker model can profit from the partial CoT of a stronger one. We show that the insights of the stronger model indeed help push the performance of the weaker one beyond what it can typically solve on its own, highlighting that CoT indeed relies on such interpretable mechanisms.\\[2mm] We believe that the potential can be used in future work to further understand how CoT contributes to successful completions of LLMs, helping to pin-point its important parts within the very long chains of today's reasoning models. The potential could also serve as a partial reward in RL training to perform more fine-grained credit assignment, indeed concurrent work has already started exploring this angle \citep{guo2025segment, hou-etal-2025-treerl}. Finally, we also hope that the transferability of partial CoTs can be leveraged in reinforcement learning to reduce sparsity of rewards with \citet{amani2025rlreasoningadaptivelyrevealing} already obtaining positive initial results.

\bibliography{iclr2026_conference}
\bibliographystyle{iclr2026_conference}
\appendix
\section{Appendix}
\subsection{Experimental Details}
We use vllm \citep{kwon2023efficient} for all of our experiments. For potential calculation we set $N=128$ and use a temperature of $T=0.6$ and $p=0.95$ as sampling parameters. For all models and datasets we generate $T=32k$ tokens excluding the prompt. To ensure that the potential does not increase due to higher generation length, we always subtract the length of the partial CoT from $32k$ and use this number as $T$. 
\subsection{Proof of Proposition \ref{prop:potential}}
\label{app:proof}
Here we present the previously omitted  proof of Proposition \ref{prop:potential}:
\begin{proof}
Define $p_i(c_{t+1})$ for $i \in \{0,1\}$ as the next-token distributions
\[
p_i(c_{t+1}) := \mathbb{P}(c_{t+1}\mid x, c_{1:t}, y=i),
\]
where $y=1$ indicates that the entire run yields a correct final answer and $y=0$ that it does not.  

By Bayes' rule, for any token $c_{t+1}$ we have
\[
f_{t+1}=\mathbb{P}(y=1\mid x,c_{1:t},c_{t+1})
=\frac{f_t\,p_1(c_{t+1})}{f_t\,p_1(c_{t+1})+(1-f_t)\,p_0(c_{t+1})}.
\]

Taking expectation with respect to $c_{t+1}$ drawn from $p_1$, i.e.\ conditioned on the event that the rest of the run is correct, gives
\[
\mathbb{E}[f_{t+1}]
= f_t \sum_{c_{t+1}} p_1(c_{t+1})\,
   \frac{p_1(c_{t+1})}{f_t p_1(c_{t+1})+(1-f_t)p_0(c_{t+1})}.
\]

Equivalently,
\[
\mathbb{E}[f_{t+1}]
= f_t \sum_{c_{t+1}} \frac{p_1(c_{t+1})^2}{f_t p_1(c_{t+1})+(1-f_t)p_0(c_{t+1})}.
\]

Now apply the Cauchy--Schwarz inequality with weights 
$q(c_{t+1}) = f_t p_1(c_{t+1}) + (1-f_t) p_0(c_{t+1})$:
\[
\Bigg(\sum_{c_{t+1}} \frac{p_1(c_{t+1})^2}{q(c_{t+1})}\Bigg)
\Bigg(\sum_{c_{t+1}} q(c_{t+1})\Bigg)
\;\ge\;
\Bigg(\sum_{c_{t+1}} p_1(c_{t+1})\Bigg)^2.
\]

Since $\sum_{c_{t+1}} q(c_{t+1})=1$ and $\sum_{c_{t+1}} p_1(c_{t+1})=1$, it follows that
\[
\sum_{c_{t+1}} \frac{p_1(c_{t+1})^2}{q(c_{t+1})} \;\ge\; 1.
\]

Therefore,
\[
\mathbb{E}[f_{t+1}] \;\ge\; f_t.
\]

Finally, taking expectation over prefixes $c_{1:t}$ distributed as on correct runs yields
\[
\mathbb{E}[f_{t+1}] \;\ge\; \mathbb{E}[f_t],
\]
which is the desired result.
\end{proof}
\subsection{More CoT Examples}
\label{app:more-cot}
We display more quantitative results in Table \ref{tab:math-500} for the MATH-500 dataset, in Table~\ref{tab:humaneval} we show results for the coding benchmark HumanEval \citep{chen2021evaluatinglargelanguagemodels} and in Table \ref{tab:gpqa} the ones for GPQA-Diamond \citep{rein2023gpqagraduatelevelgoogleproofqa}. We find that the same trends observed for AIME-2024 hold also in this case. Interestingly, models seem more stable on coding benchmarks, with their CoT displaying more monotonic behaviour and in general less tangents. For GPQA we find trends more consistent with AIME, most likely because more questions are very difficult compared to HumanEval. As expected due to GPQA being a multiple choice benchmark with only four choices, we observe a significantly higher guessing rate. In this case we adjusted the threshold for guessing to $25\%$ (as opposed to the previously used $5\%$) as a random guessing baseline would of course always achieve $25\%$. \\[2mm]We also show more annotated CoT in Fig.~\ref{fig:guess-reasoning}, Fig.~\ref{fig:insight+tangent} and for coding in Fig.~\ref{fig:potential_humaneval2} and Fig.~\ref{fig:potential_humaneval}. In Fig.~\ref{fig:insight+tangent} we have again have the model performing a reasoning insight, correctly realizing that the exponents can be deduced from the binary representation of the number. We then finally have a reasoning jump, where the model experiences a strong boost in potential from the word ``correspond''. While at first sight not clearly interpretable, we hypothesize that this word forces the model to output concrete values for $a_i$'s, otherwise a common failure model as the model tries to further refine their computation. 

\begin{table}[H]
\vskip 0.15in
\begin{center}
\begin{small}
\begin{sc}
\begin{tabular}{lccccc}
\toprule
Model & Reasoning & Insights $\uparrow$ & Tangents $\downarrow$ & Late spike & Monotonicity \\
\midrule
Qwen2.5-1.5B & \ding{55} & $23\%$ & $11\%$ & $1\%$ & $40\%$ \\[1mm]
Qwen2.5-7B   & \ding{55} & $24\%$ & $12\%$ & $1\%$ & $41\%$ \\[1mm]
Llama3.1-8B & \ding{55} & $19\%$ & $21\%$ & $1.8\%$ & $31\%$ \\[1mm]
Llama3.1-70B & \ding{55} & $22\%$ & $23\%$ & $1.5\%$ & $30\%$ \\[1mm]
Qwen3-0.6B   & \ding{51} & $30\%$ & $40\%$ & $1.4\%$ & $30\%$ \\[1mm]
Qwen3-32B    & \ding{51} & $28\%$ & $35\%$ & $0.9\%$ & $45\%$ \\
\bottomrule
\end{tabular}
\end{sc}
\end{small}
\end{center}
\vskip -0.1in
\caption{Behaviours of potential for several reasoning and non-reasoning models on MATH-500.}
\label{tab:math-500}
\end{table}

\begin{table}[H]
\vskip 0.15in
\begin{center}
\begin{small}
\begin{sc}
\begin{tabular}{lccccc}
\toprule
Model & Reasoning & Insights $\uparrow$ & Tangents $\downarrow$ & Late spike & Monotonicity \\
\midrule
Qwen2.5-1.5B & \ding{55} & $39\%$ & $7\%$ & $0\%$ & $55\%$ \\[1mm]
\midrule
Qwen2.5-7B & \ding{55} & $30\%$ & $4.8\%$ & $0\%$ & $73\%$ \\[1mm]
\midrule
Llama-3.1-8B & \ding{55} & $41\%$ & $16.5\%$ & $0.4\%$ & $42.1\%$ \\[1mm]
\bottomrule
\end{tabular}
\end{sc}
\end{small}
\end{center}
\vskip -0.1in
\caption{Behaviours of potential for several reasoning and non-reasoning models on the coding benchmark HumanEval.}
\label{tab:humaneval}
\end{table}

\begin{table}[H]
\vskip 0.15in
\begin{center}
\begin{small}
\begin{sc}
\begin{tabular}{lccccc}
\toprule
Model & Reasoning & Insights $\uparrow$ & Tangents $\downarrow$ & Late spike & Monotonicity \\
\midrule
Qwen2.5-1.5B & \ding{55} & $17\%$ & $6.8\%$ & $19\%$ & $20.9\%$ \\[1mm]
\midrule
Llama-3.1-8B & \ding{55} & $29.1\%$ & $20.0\%$ & $17.6\%$ & $25.8\%$ \\[1mm]
\bottomrule
\end{tabular}
\end{sc}
\end{small}
\end{center}
\vskip -0.1in
\caption{Behaviours of potential for several reasoning and non-reasoning models on the coding benchmark GPQA-Diamond.}
\label{tab:gpqa}
\end{table}

In Fig.~\ref{fig:guess-reasoning-instruct} we again observe a reasoning guess from Qwen2.5-1.5B, where the CoT in segment \textcircled{\raisebox{-0.9pt}{1}}, while seemingly making sense at first sight, actually does not contribute to the final answer at all. In fact the number $80$ does not relate at all to the computations made before. This is reflected in the potential, that shows a spike only towards the very end, highlighting that the CoT indeed did not contribute.

Finally, we show an instance of optimized CoT introduced in Sec.\ref{sec:quantiative}. We observe that the potential is now strongly monotonic, with almost every partial CoT leading to some improvement in the potential. This is also reflected qualitatively, we can see that the CoT is more concise in language, in fact we can display all of it here. In segment \textcircled{\raisebox{-0.9pt}{1}} the model makes slower progress as those are steps it can reliably do. Finally, the model undergoes a reasoning insight \textcircled{\raisebox{-0.9pt}{2}} with the model discovering that $d$ needs to divide $56$.
\begin{figure}
    \centering
    \includegraphics[width=1\linewidth]{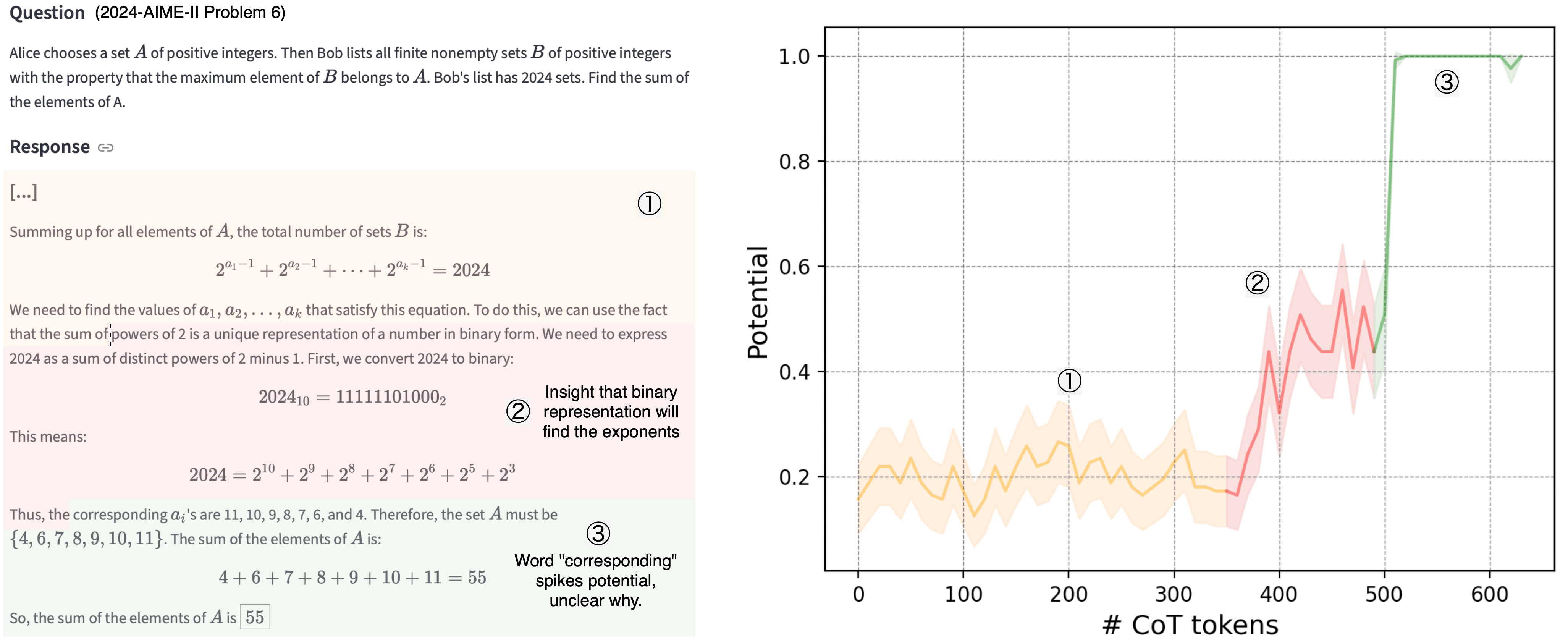}
    \caption{\textbf{Unintuitive reasoning jumps.} Qwen2.5-7B's potential $\operatorname{pot}_{256}(\bigcdot;\bm{x})$ remains flat in \textcircled{\raisebox{-0.9pt}{1}} although crucial insights are obtained. The potential then increases due to a reasoning \textit{insight} in \textcircled{\raisebox{-0.9pt}{2}} (realizing that the binary representation determines the exponents). In \textcircled{\raisebox{-0.9pt}{3}} we obtain the final spike at the word ``corresponding'', a reasoning \textit{jump}, which seems strange from a human perspective. We hypothesize that it might force the model to output values for $a_i$'s, which indeed is the next logical step. We indeed observe that without this word, the model continues to perform unnecessary calculations, subsequently leading to wrong values for $a_i$.}
    \label{fig:insight+tangent}
\end{figure}

\begin{figure}
    \centering
    \includegraphics[width=1\linewidth]{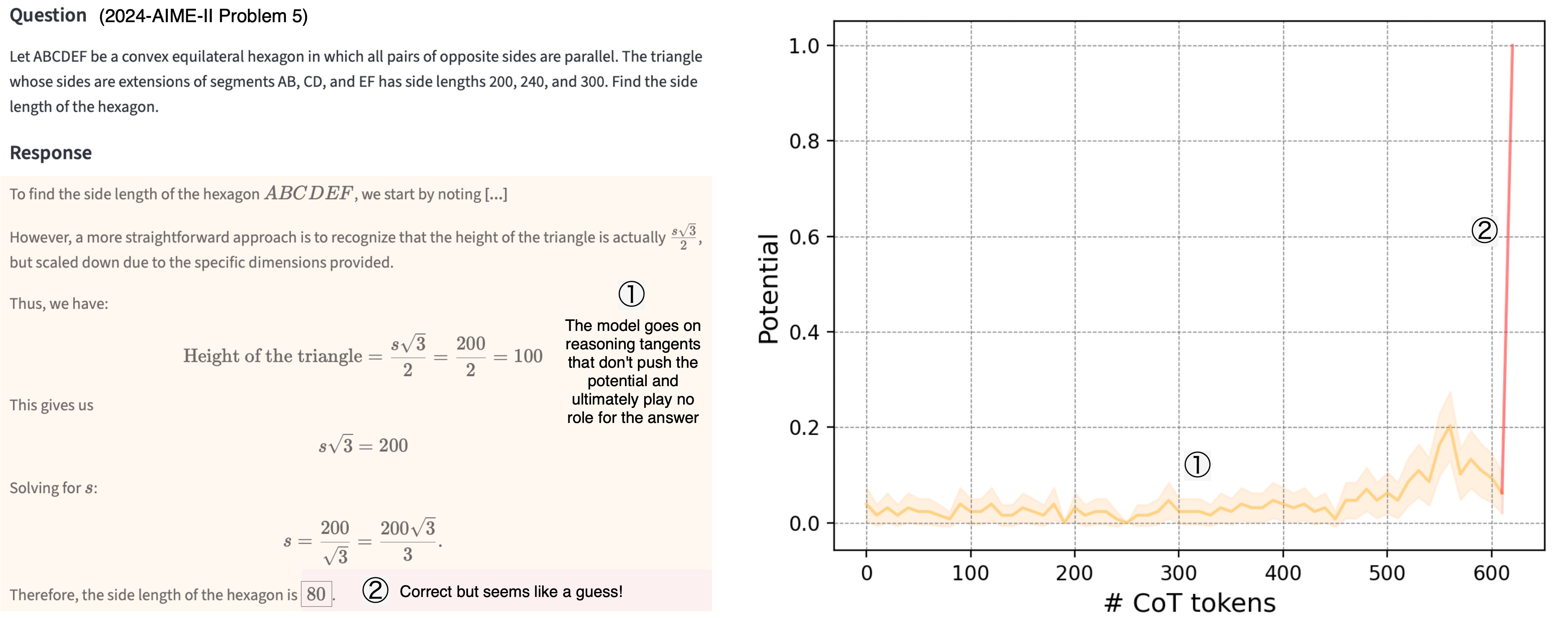}
    \caption{\textbf{Reasoning tangents and guessing.} Qwen2.5-1.5B goes on a long reasoning tangent in \textcircled{\raisebox{-0.9pt}{1}} that does not increase the potential over a long token horizon. Finally it outputs a final answer in \textcircled{\raisebox{-0.9pt}{2}} unrelated to the previous reasoning that happens to be correct.}
    \label{fig:guess-reasoning-instruct}
\end{figure}

\begin{figure}
    \centering
    \includegraphics[width=1\linewidth]{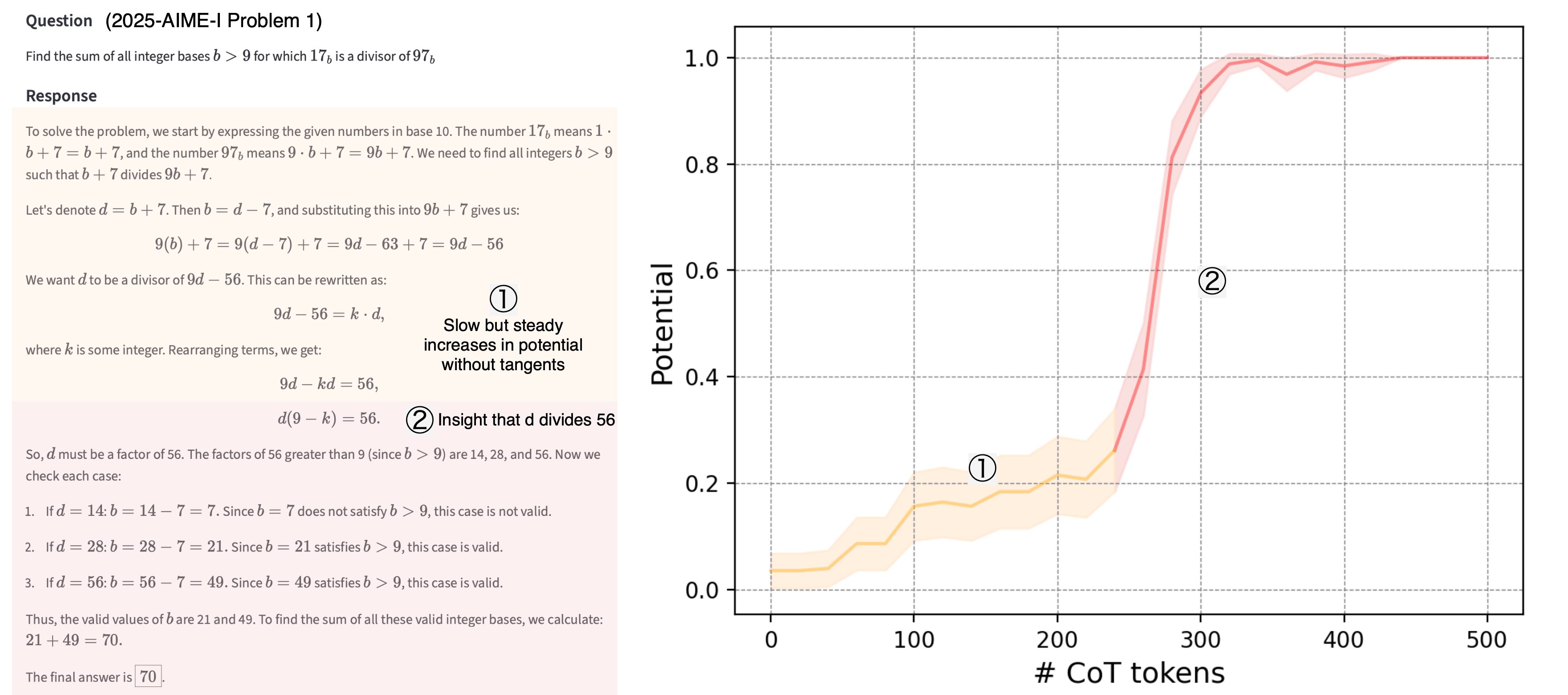}
    \caption{\textbf{Optimized CoT.} We show a trajectory based on the optimized CoT from Qwen2.5-1.5B. The CoT is more concise, actually allowing us to show it here in full length. The potential is monotonic as anticipated and all tokens contribute to it.  In segment \textcircled{\raisebox{-0.9pt}{1}} the model makes slower progress as those are steps it can reliably do. Finally, the model undergoes a reasoning insight \textcircled{\raisebox{-0.9pt}{2}} with the model discovering that $d$ needs to divide $56$.}
    \label{fig:optimal_cot_annotated}
\end{figure}

\begin{figure}
    \centering
    \includegraphics[width=1\linewidth]{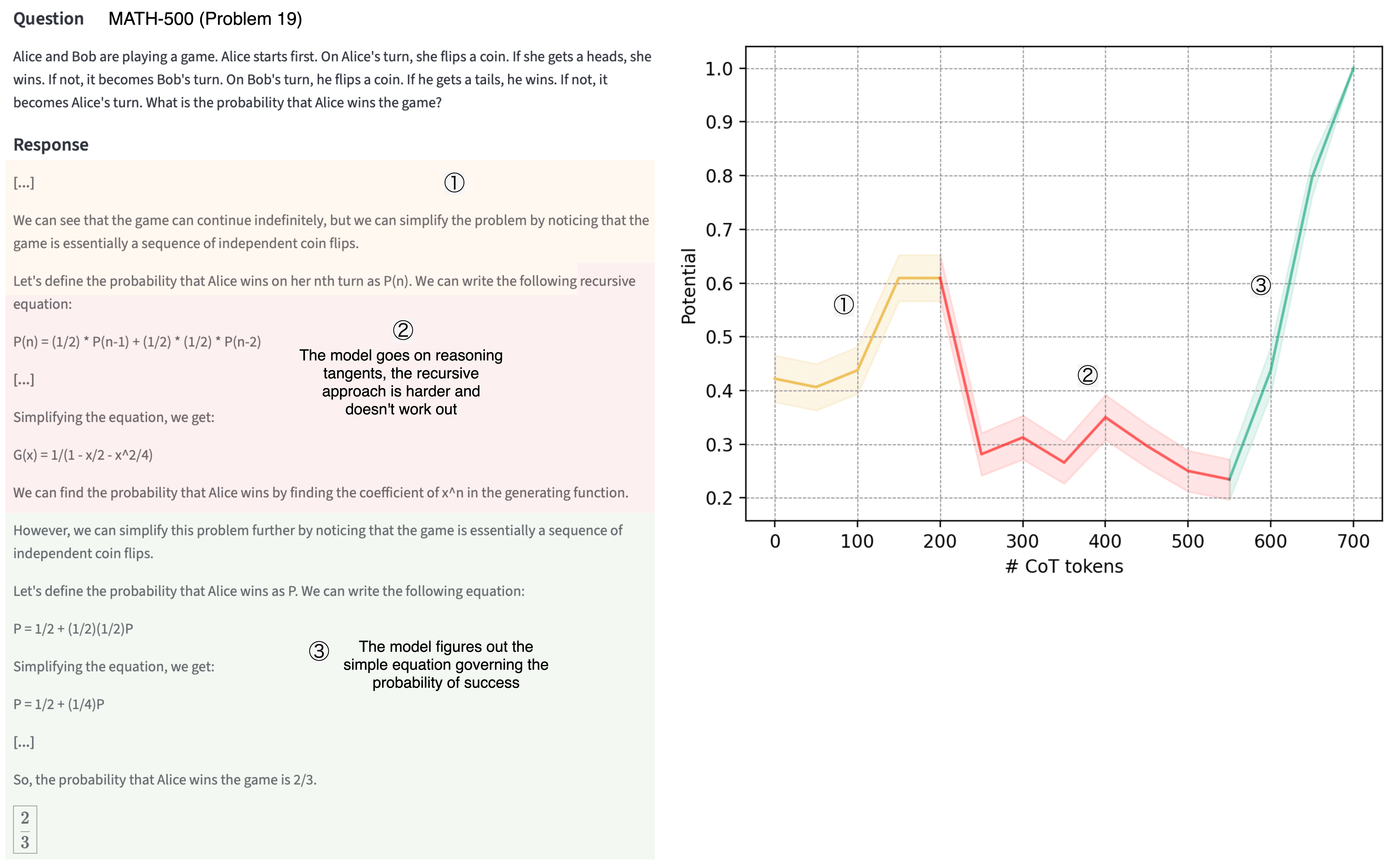}
    \caption{\textbf{Reasoning tangents and insights for Llama3.1-8B on Math-500.} We show an example for another dataset using Llama3.1-8B. Similar observations hold; the models tend to go on reasoning tangents as shown here with the model exploring a more difficult recursive approach, which it ultimately abandons by deriving the key insight, expressing the probability as a simple linear equation.}
    \label{fig:math-500-llama}
\end{figure}

\begin{figure}
    \centering
    \includegraphics[width=1\linewidth]{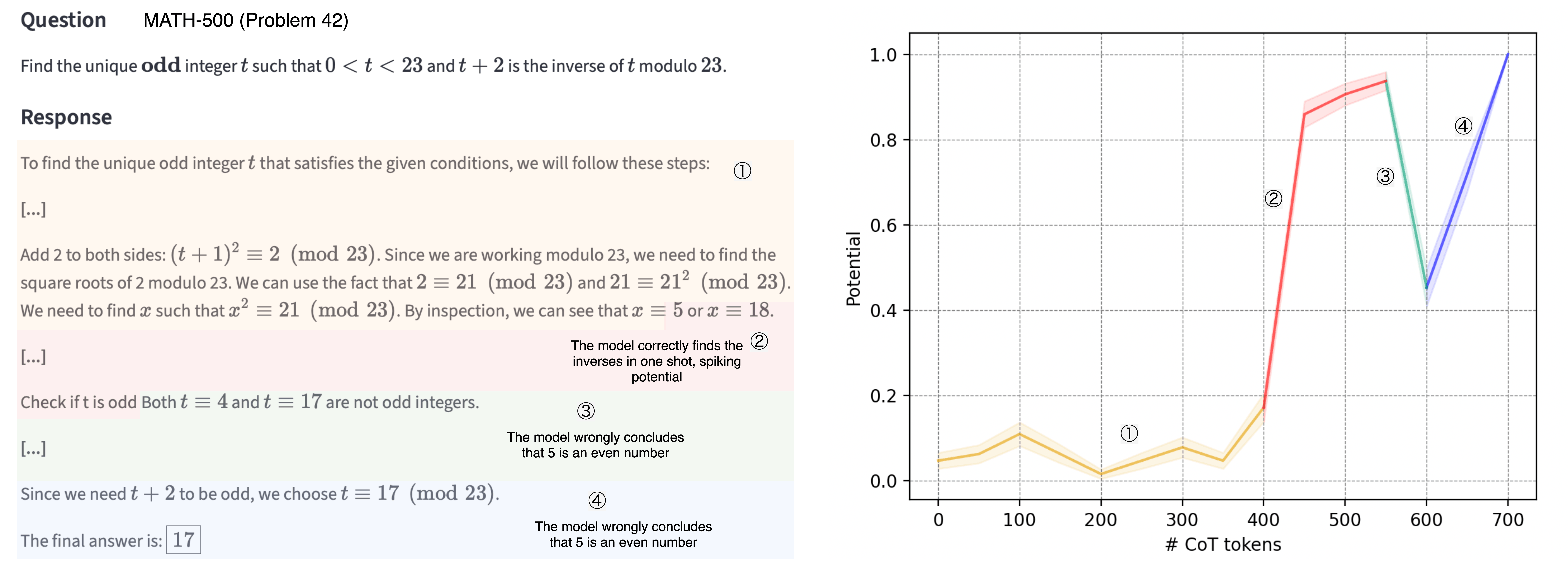}
    \caption{\textbf{Reasoning tangents and insights for Llama3.1-8B on Math-500.} We show an example for another dataset using Llama3.1-8B. Again the model displays a key insight, realizing directly that $5$ and $18$ are the square roots. A small tangent is also encountered, with the model missing that $17$ is an odd number, but subsequently corrected later in the reasoning.}
    \label{fig:math-500-llama-2}
\end{figure}

\begin{figure}
    \centering
    \includegraphics[width=1\linewidth]{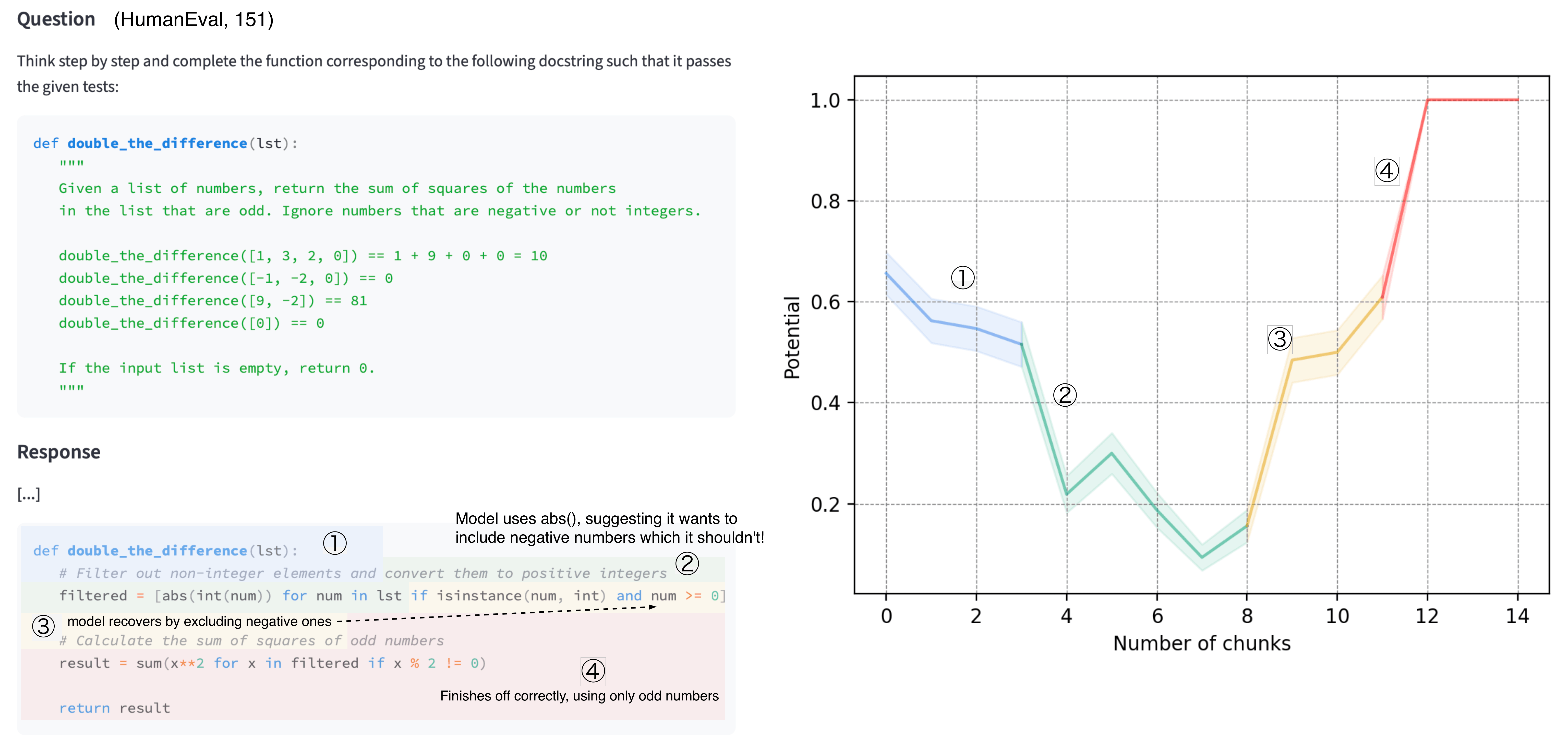}
    \caption{\textbf{Potential for coding tasks.} We analyze a trace of Qwen2.5-1.5-Instruct on HumanEval and identify the occurence of tangents and insights. Here, the model gets distracted initially by outlining that it will convert all numbers to positive integers, and indeed using commands abs() and int(), dropping the potential significantly as a consequence. The model then recovers by fixing the for-loop using if statements, which actually render abs() and int() effectless as the loop anyways only goes over positive integers.}
    \label{fig:potential_humaneval2}
\end{figure}

\begin{figure}
    \centering
    \includegraphics[width=1\linewidth]{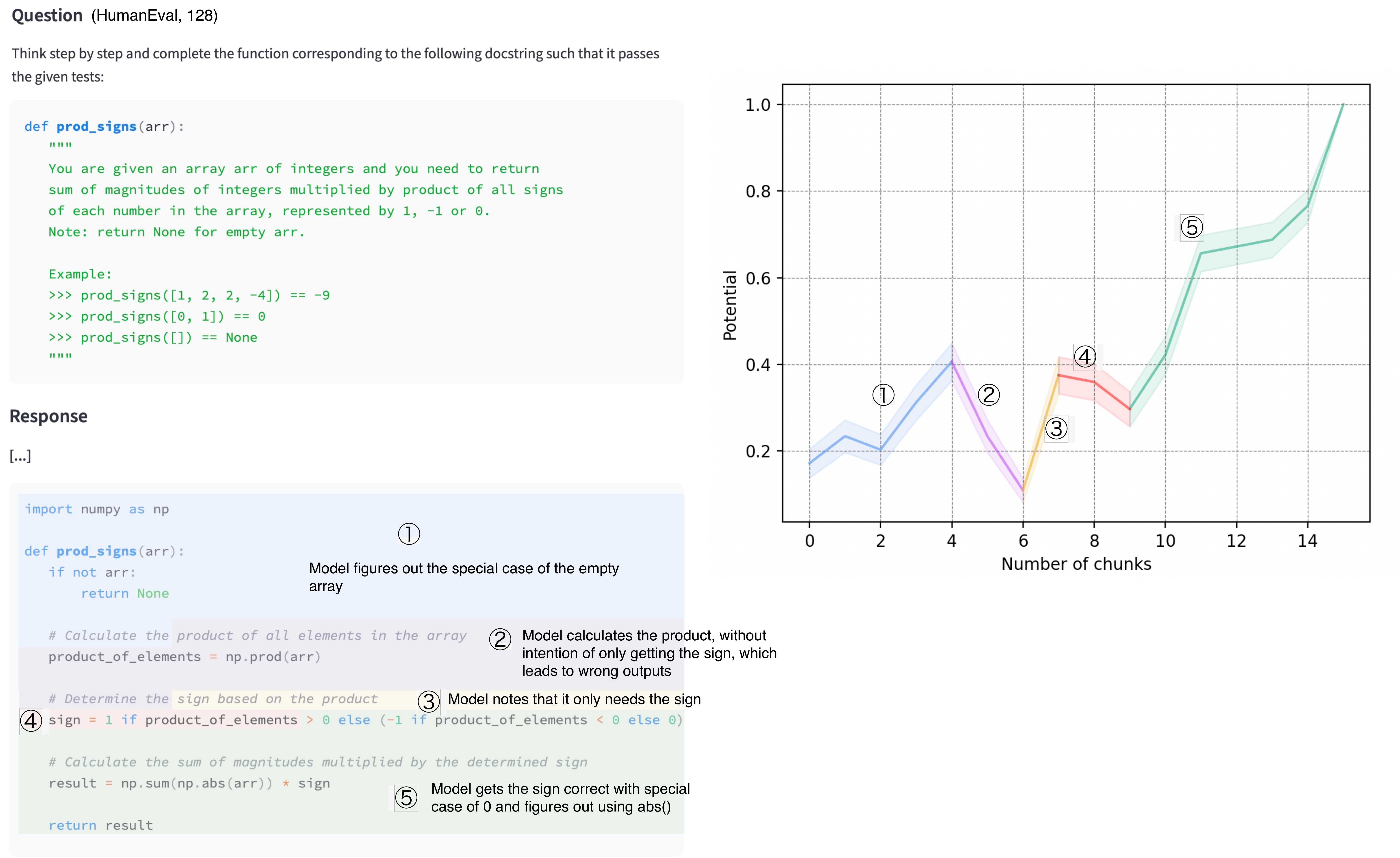}
    \caption{\textbf{Potential for coding tasks.} We analyze a trace of Qwen2.5-1.5-Instruct on HumanEval and identify the occurence of tangents and insights.}
    \label{fig:potential_humaneval}
\end{figure}

\subsection{Stability of CoT}
We can also consider a slight variation of the potential, called the stability of a CoT. Given a prompt $\bm{x}$, CoT reasoning and answer pair $(\bm{c}, y)$ we define the stability of a sub-chain $\bm{c}_{< t}$ as
$$\operatorname{stable}_N(\bm{c}_{< t}; \bm{x}, y) := \frac{1}{N}\sum_{n=1}^{N}\mathds{1}_{\{y^{(n)} = y\}} \hspace{3mm}\text{where } \left({y}^{(n)}, \bm{c}_{\geq t}^{(n)}\right) \sim \llm(\bigcdot|\bm{c}_{< t}, \bm{x})$$
with the slight variation that instead of considering the ground truth $y^{*}$, we now consider the reached final answer of the chain $\bm{c}$ as the target. I.e. the potential is a special of stability, when $y=y^{*}$. Stability measures how \textit{determined} the final answer is throughout the reasoning process of the model. Somewhat surprisingly, we observe that correct answers do not necessarily always display higher stability, indicating that models can become convinced very early on in their reasoning about wrong answers. We display various stability curves in Fig.~\ref{fig:stability-curves}.
\begin{figure}
    \centering    \includegraphics[width=0.5\linewidth]{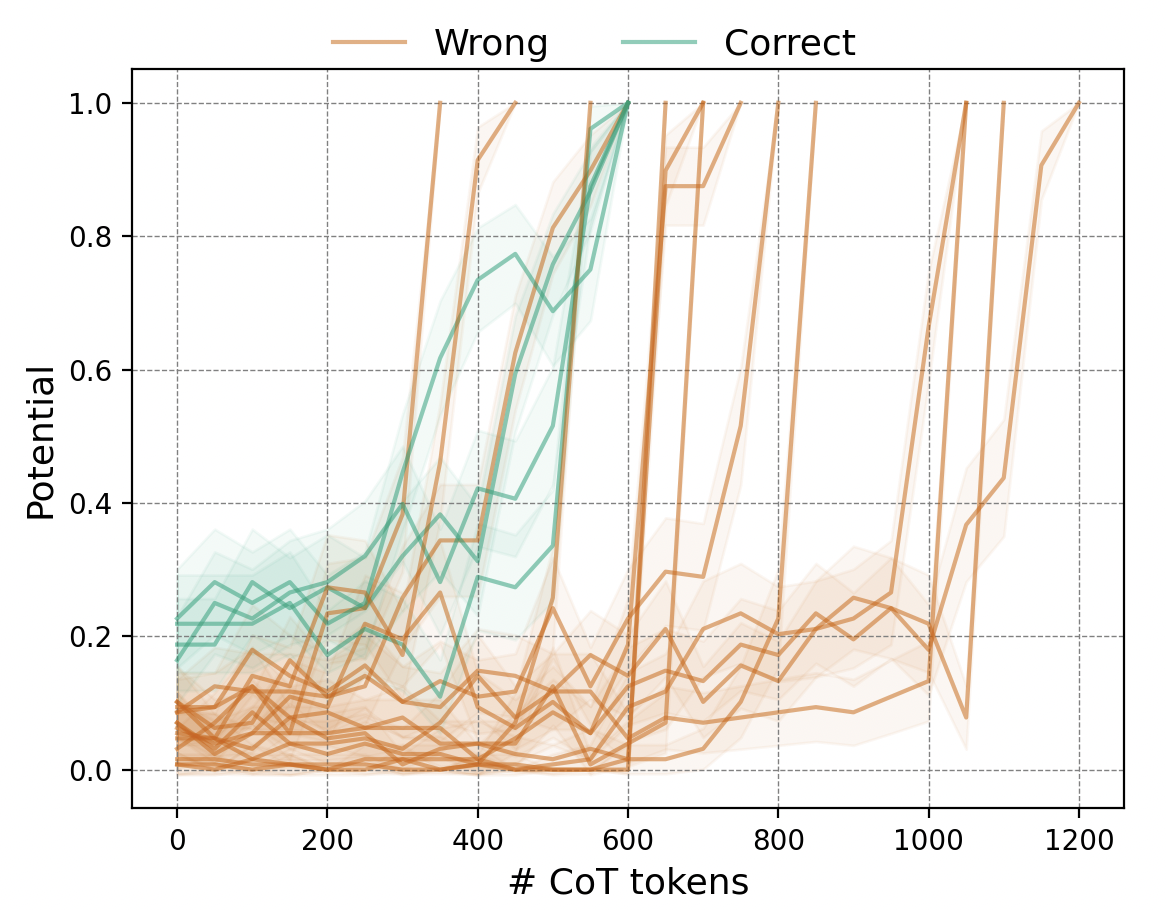}%
    \includegraphics[width=0.5\linewidth]{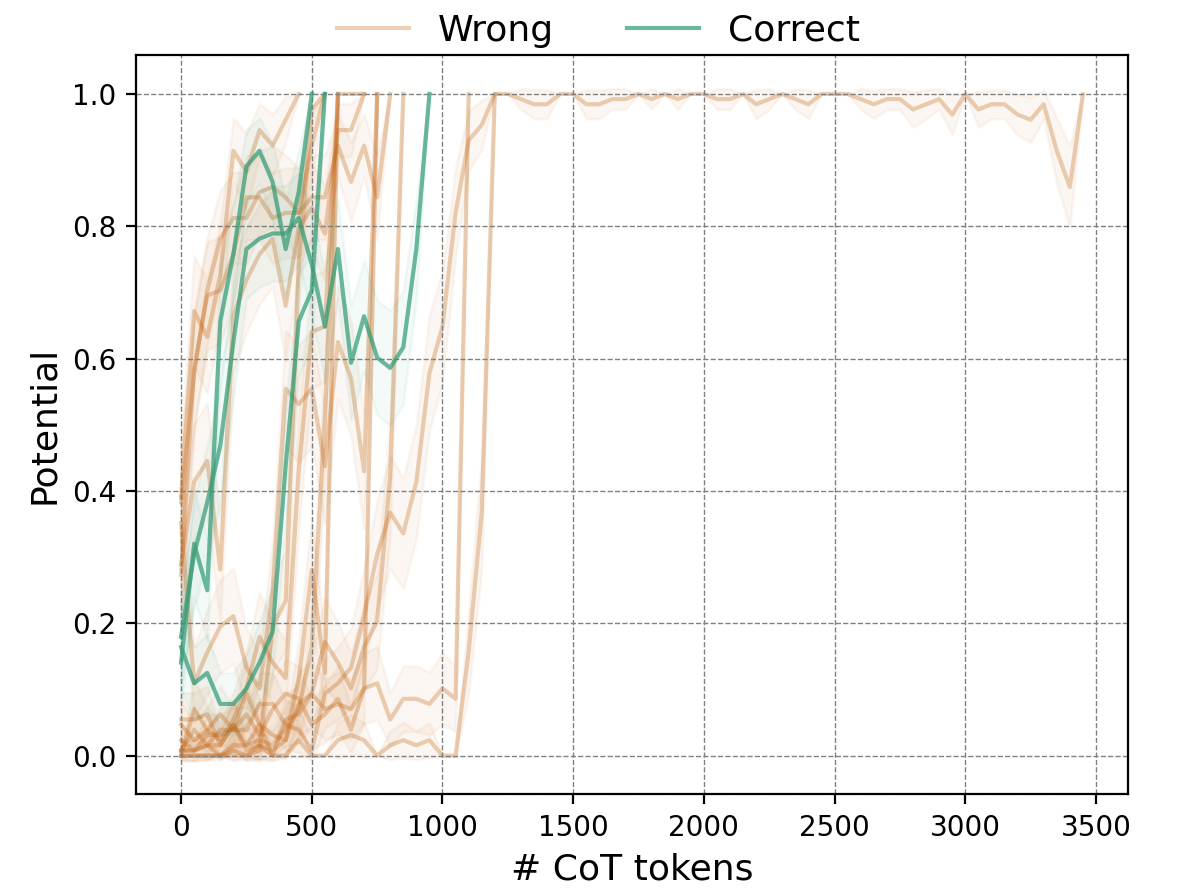}
    \caption{\textbf{Stability profiles.} Stability profiles for Qwen2.5-1.5B and Qwen2.5-7B on AIME 7 and 26 respectively. Correct and wrong answers exhibit similar profiles across models and questions.}
    \label{fig:stability-curves}
\end{figure}

\subsection{Computational complexity of potential calculation}
\label{app:compute-potential}
The computational load to obtain the potential curve of a given prompt scales with the number of evaluations $N$ produced to estimate the potential and the number of points $N_{\text{chunks}}$ we calculate the potential for. More precisely, say for a generation from scratch we produce $T$ tokens, the total amount of tokens produced to estimate the potential curve will be given by
$$T_{\text{tot}} = N \sum_{i=1}^{N_{\text{chunks}}}\frac{i}{N_{\text{chunks}}}T \approx \frac{N N_{\text{chunks}}T}{2}$$
$N_{\text{chunks}}$ hence also gives us a way to control the amount of compute and we set it to moderate values of $N_{\text{chunks}} = 15$ for the quantitative evaluations, while we use higher values of around $N_{\text{chunks}} = 50$ for most figures, to obtain a more fine-grained understanding. Advanced inference techniques such as prefix-caching and continuous batching in vLLM further help to speed up the potential calculation, as all of the prefill is shared among the prompts, and every prompt can in theory be generated in parallel, thus allowing for continuous batching. This is different for the optimal CoT, where inference has to be done sequentially as things depend on eachother, making it significantly slower.
\subsection{Calculation of Optimal CoT}
\label{app:optimal-cot}
We detail the recipe to calculate the optimal CoT in Algorithm~\ref{algo:optimal} below:
\begin{algorithm}[H]
\caption{Generating potential-optimized CoTs}
\label{algo:optimal}
\begin{algorithmic}[1]
\State Initialize the CoT $\bm{c}_{< t} \gets \emptyset$
\While{the chosen candidate does not contain the answer}
    \State Sample $M$ candidate CoT chunks 
    $\bm{c}_{t:(t+T)}^{(m)} \stackrel{i.i.d.}{\sim} \llm(\cdot \mid \bm{c}_{< t}, \bm{x})$ 
    of length $C$, for $m=1,\dots,M$
    \State Compute potentials 
    $p_m \gets \operatorname{pot}_N(\bm{c}_{< t+T}^{(m)};\bm{x})$
    \State Select $\tilde{m} \gets \arg\max_m \, p_m$
    \State Update $\bm{c}_{< t+T} \gets [\bm{c}_{< t}, \bm{c}_{t < (t+T)}^{(\tilde{m})}]$
\EndWhile
\end{algorithmic}
\end{algorithm}
The computational complexity of the optimal CoT calculation is quite high with $\approx \frac{MN N_{\text{chunks}}T}{2}$ tokens needed, while also not enjoying the same benefits from prefix caching and continuous batching, as the potential curve calculation did. We thus do not recommend its usage in practical setting but view it more as a proof of concept. \\[2mm]
A consequence of our greedy strategy is the following: The optimal CoT could degenerate to a "lucky guessing" answer when $M$ goes to infinity, given that the model has non-zero support across all tokens, as this would effectively minimize the objective. Since we are restricted to using a finite $M$ however in practice, we are implicitly regularizing the objective to sequences of sufficient probability mass, naturally circumventing this degeneracy. To make this regularization more explicit, one could potentially enhance the objective with a regularization term to require a given chain to meet a certain probability budget, eliminating this possibility. We observe however that $M$ being finite is enough to avoid such degenerate cases. \\
We do however want to stress that the optimal CoT does depend on $M$.

\subsection{More details on summary statistics}
\label{app:statistics}
Here we provide the definitions for the statistics we used in Sec.~\ref{sec:quantiative}. In all experiments we divide the CoT into $20$ chunks, getting thus potential curves consisting of $20$ points.
\begin{itemize}
    \item \textbf{Insight:} We say that a given potential contains an insight if the difference between two consecutive chunks of CoT exceeds $40\%$, i.e. if one step of CoT raised the potential by at least $40\%$. We exclude the last two steps to make sure we don't count the late reasoning spikes as insights.
    \item \textbf{Tangent:} We define a potential to exhibit a tangent if the potential drops by at least $30\%$, not necessarily consecutively. 
    \item \textbf{Guess:} We define late reasoning spikes or guesses as the case when the potential at the second to last step is smaller than $5\%$.
    \item \textbf{Monotonicity:} We call a potential monotone if its consecutive steps do not decrease by more than $10\%$.
\end{itemize}
\end{document}